%% file: iclr2026_conference.tex
\documentclass{article} % For LaTeX2e
\usepackage{times}
\usepackage[preprint]{neurips_2026}

\input{math_commands.tex}

\usepackage[utf8]{inputenc} % allow utf-8 input
\usepackage[T1]{fontenc}    % use 8-bit T1 fonts
\usepackage{amsfonts}       % blackboard math symbols
\usepackage{nicefrac}       % compact symbols for 1/2, etc.
\usepackage{microtype}      % microtypography
\usepackage{xcolor}         % colors

\usepackage{hyperref}
\usepackage{url}
\usepackage{bbm}
\usepackage{amsthm}
\newtheorem{theorem}{Theorem}

\usepackage{booktabs}
\usepackage{multirow}
\usepackage{graphics}
\usepackage{graphicx}
\usepackage{booktabs}
\usepackage{multirow}
\usepackage{caption}
\usepackage{etoolbox}
\usepackage{makecell}
\usepackage{listings}
\usepackage{textcase}

\usepackage{graphicx}
\usepackage{pifont}% http://ctan.org/pkg/pifont
\usepackage{bbm}
\usepackage{bm}
\usepackage{caption}
\usepackage{amsmath}
\usepackage{amssymb}
\usepackage{mathtools}

\newtheorem{lemma}[theorem]{Lemma}
\theoremstyle{definition}

\usepackage{multirow}
\usepackage{adjustbox}
\usepackage[linesnumbered,ruled]{algorithm2e}
\usepackage{amssymb}
\usepackage{dirtytalk}
\usepackage{xurl}
\usepackage{thmtools}
\usepackage{thm-restate}
\usepackage{arydshln}
\usepackage{tablefootnote}
\usepackage{dirtytalk}
\usepackage{listings}
\usepackage{xcolor}
\usepackage{mathtools}

\title{Making Reconstruction FID Predictive of\\ Diffusion Generation FID}

\author{
Tongda Xu$^{1}$, Mingwei He$^{1}$, Shady Abu-Hussein$^{2}$, José Miguel Hernández-Lobato$^{2*}$, \\
$^{1}$Tsinghua University, $^2$University of Cambridge \AND Chunhang Zheng$^{3}$, Kai Zhao$^{4}$, Chao Zhou$^{4}$, Ya-Qin Zhang$^{1}$, Yan Wang$^{1}$\thanks{To whom correspondence should be addressed.} \\
$^{3}$Peking University, $^{4}$Kuaishou Technology
}

\begin{document}

\maketitle

\begin{abstract}
It is well known that the reconstruction FID (rFID) of a VAE is poorly correlated with the generation FID (gFID) of a latent diffusion model. We propose \textbf{interpolated FID (iFID)}, a simple variant of rFID that exhibits a strong correlation with gFID. Specifically, for each dataset element, we retrieve its nearest neighbor in latent space, interpolate between their latent representations, decode the interpolated latent, and compute the FID between the decoded samples and the original dataset. We provide an intuitive explanation for why iFID correlates well with gFID, and why reconstruction metrics can be negatively correlated with gFID, by connecting iFID to recent results on diffusion generalization and hallucination. Theoretically, we show that iFID evaluates decoded interpolations aligned with the ridge set around which diffusion samples concentrate, thereby measuring a quantity closely related to diffusion sample quality. Empirically, iFID is the first metric shown to strongly correlate with diffusion gFID across diverse VAEs, achieving Pearson and Spearman correlations of approximately $0.85$. The project page is available at \url{https://tongdaxu.github.io/pages/ifid.html}.
\end{abstract}

\begin{figure}[thb]
\centering
    \includegraphics[width=\linewidth]{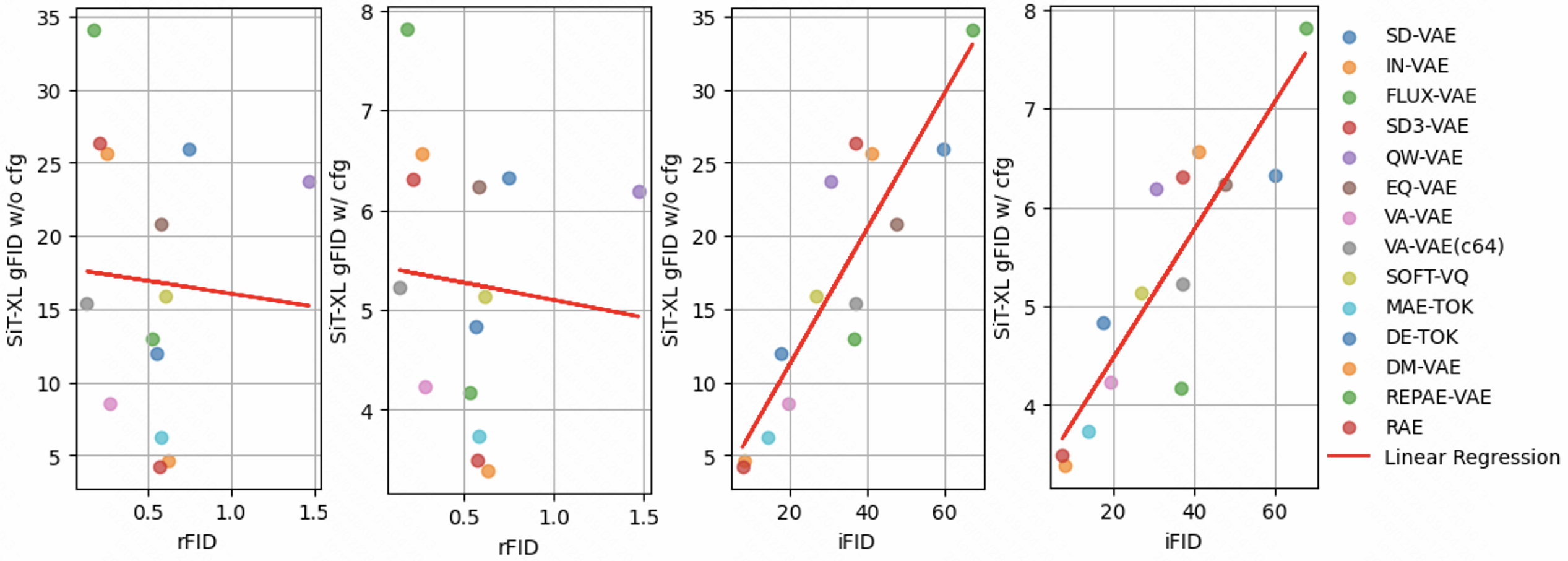}
\caption{\textit{Left two plots}: The rFID values of VAEs are uncorrelated with, or even negatively correlated with, the gFID values of diffusion models. \textit{Right two plots}: The iFID metric exhibits a strong positive correlation with the gFID values of diffusion models.}
\label{fig:mcov}
\end{figure}

\section{Introduction}

Variational autoencoders (VAEs) are foundational components of latent diffusion models (LDMs) \citep{rombach2022high,flux2024,esser2024scaling}. In an LDM, a VAE first maps images into a latent space, where a diffusion model is then trained to generate samples that are decoded back into pixel space. VAEs are typically optimized and evaluated by reconstruction quality \citep{rombach2022high,esser2024scaling}, \textit{e.g.} reconstruction Frechet Inception Distance (rFID) \citep{Heusel2017GANsTB}. Intuitively, a VAE with better reconstruction should preserve more image details and therefore lead to better generation. However, a phenomenon termed the \say{reconstruction--generation dilemma} has recently been widely observed \citep{Yao2025ReconstructionVG,Kouzelis2025EQVAEER,ye2025distribution,Skorokhodov2025ImprovingTD,Chen2025MaskedAA}: VAEs with excellent rFID can yield poor generation FID (gFID), whereas VAEs with worse rFID can achieve better generation performance.

In this work, we propose \textbf{interpolated FID (iFID)}, a simple variant of rFID that correlates strongly with gFID. Specifically, for each data point in the dataset, we identify its nearest neighbor in latent space. We then interpolate between the two latents and decode the interpolated latent into an image. Finally, we compute the FID between these decoded interpolated samples and the original dataset. By connecting iFID to recent results on diffusion generalization and hallucination, we provide an intuitive explanation for why iFID correlates well with diffusion sample quality, whereas reconstruction metrics can be negatively correlated with diffusion sample quality. Theoretically, we show that iFID measures diffusion sample quality by approximating samples from the ridge set. Empirically, we find that iFID is strongly correlated with diffusion gFID, achieving a correlation of approximately $0.85$.

In summary, our contributions are as follows:
\begin{itemize}
\item We propose interpolated FID (iFID), a simple variant of rFID based on nearest-neighbor latent interpolation. iFID is the first metric shown to strongly correlate with diffusion gFID, achieving high Pearson and Spearman correlations across a wide range of VAEs.
\item We provide an intuitive explanation for why iFID correlates well with generation performance whereas reconstruction metrics do not, by relating latent interpolation quality to recent results on diffusion generalization and hallucination.
\item Theoretically, we show that the ridge set manifold which diffusion samples concentrate around is close to the lines connecting nearby training samples. This explains why iFID is aligned with diffusion sample quality, as it evaluates decoded samples from the approximate ridge set.
\end{itemize}

\section{Preliminaries: Latent Diffusion Models}

The latent diffusion model \citep{rombach2022high} consists of a variational autoencoder (VAE) \citep{Kingma2013AutoEncodingVB}, which projects images into a latent space, and a diffusion model \citep{Ho2020DenoisingDP}, which generates samples in this latent space. Denote the source image as $X \sim p(X)$ and its latent representation as $Z \sim q(Z|X)$, where $q(Z|X)$ is the posterior distribution. The reconstructed image is given by $\hat{X} = g(Z)$, where $g(\cdot)$ denotes the decoder. A diffusion model is trained in the latent space to approximate the latent distribution $p(Z)$, and generated latents are subsequently decoded by $g(\cdot)$. The forward process of a diffusion model constructs a Markov chain by incrementally adding noise over timesteps $t \in [0,T]$. More specifically, given diffusion parameters $\alpha_t$ and $\sigma_t^2$, the conditional forward kernel is
\begin{gather}
    q(Z_t|Z) = \mathcal{N}(\alpha_t Z,\sigma_t^2I).
\end{gather}
The reverse process of a diffusion model is described by the reverse-time stochastic differential equation (SDE) \citep{Anderson1982ReversetimeDE}. Sampling from the diffusion model involves simulating the reverse SDE from timestep $T$ to $0$ \citep{Song2020ScoreBasedGM}, which depends on the score $\nabla_{Z_t}\log q(Z_t)$. Therefore, a diffusion model is commonly parameterized by a score estimator $s_{\theta}(Z_t, t)$ that approximates $\nabla_{Z_t}\log q(Z_t)$ and is optimized by minimizing the denoising score matching loss \citep{Vincent2011ACB}:
\begin{gather}
    \mathcal{L}_\text{DSM} = \mathbb{E}\|s_{\theta}(Z_t,t) - \nabla_{Z_t}\log q(Z_t|Z)\|^2. \label{eq:dsm}
\end{gather}

\section{Making Reconstruction FID Predictive of Generation FID}
\subsection{Reconstruction FID, Generation FID, and Interpolated FID}
We denote the images in the validation dataset by $x^{(1:N)}$, their latent representations by $z^{(1:N)}$, the Frechet Inception Distance (FID) between two sets of images by $d_{\textup{FID}}(\cdot,\cdot)$, the decoder by $g(\cdot)$, the diffusion solver by $\Phi(\cdot,\cdot)$, and Gaussian noise by $\epsilon \sim \mathcal{N}(0,I)$. Reconstruction FID (rFID) and generation FID (gFID) are defined as
\begin{gather}
    \textup{rFID}=d_{\textup{FID}}(x^{(1:N)},g(z^{(1:N)})),
    \textup{gFID}= d_{\textup{FID}}(x^{(1:N)},g(\Phi(\epsilon^{(1:M)},T))).
\end{gather}
It is well known that rFID does not correlate well with gFID \citep{Kouzelis2025EQVAEER, Yao2025ReconstructionVG}. This phenomenon is often referred to as the \say{reconstruction--generation trade-off}. In this paper, we propose a remarkably simple variant of rFID that exhibits a strong correlation with gFID. Specifically, instead of directly evaluating the original latents $z^{(1:N)}$, we linearly interpolate between each latent $z^{(i)}$ and its nearest neighbor $z^{(i^*)}$ in latent space:
\begin{gather}
    \textup{iFID}:=d_{\textup{FID}}(x^{(1:N)},g(\hat{z}^{(1:N)}))\textup{, where } \hat{z}^{(i)} = \frac{z^{(i)}+z^{(i^*)}}{2},
    z^{(i^*)} := \arg\min_{j\neq i}||z^{(j)}-z^{(i)}||.
\end{gather}
We refer to this metric as \textbf{interpolated FID (iFID)}, since it evaluates the sample quality of decoded interpolated latents. As shown in Figure~\ref{fig:mcov}, iFID correlates strongly with gFID.
\begin{figure}[thb]
\centering
    \includegraphics[width=\linewidth]{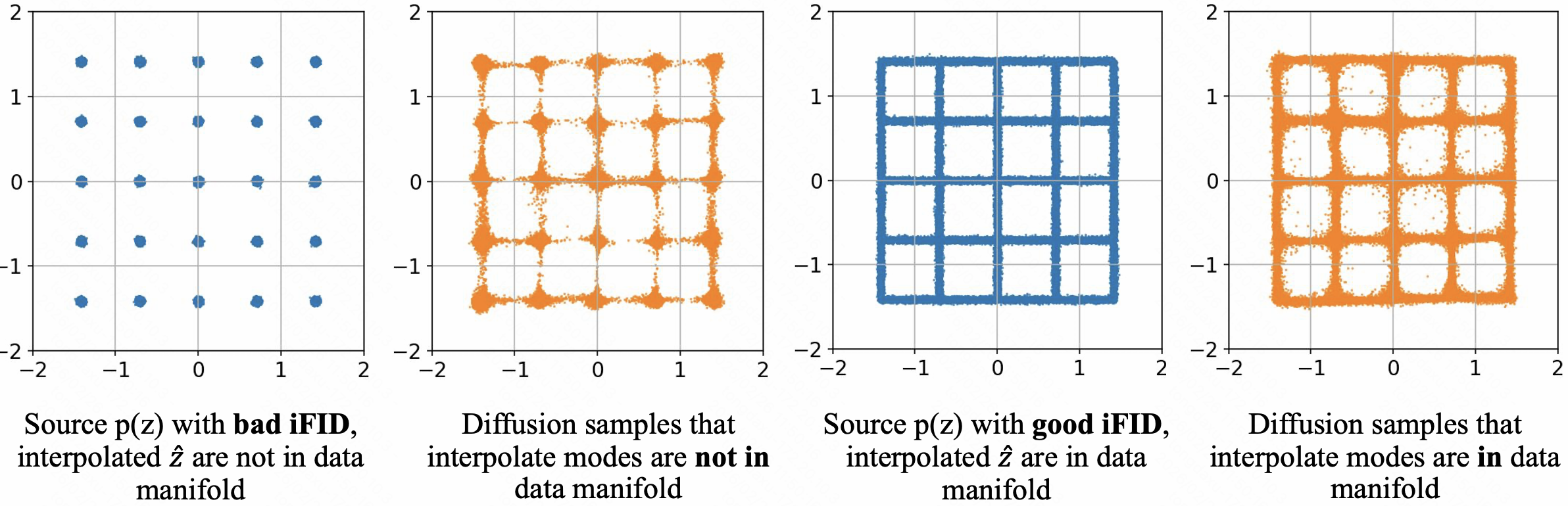}
\caption{Toy example illustrating how latent-space geometry affects diffusion samples.
\textit{Left two plots}: An isolated 25-component Gaussian mixture arranged on a two-dimensional square grid has poor iFID because the interpolated latent $\hat{z}$ lies off the data manifold. Diffusion samples interpolating between nearby modes also fall outside the data manifold, leading to hallucination. \textit{Right two plots}: A connected 25-component Gaussian mixture has good iFID because the interpolated latent $\hat{z}$ remains on the data manifold. Diffusion samples interpolating between nearby modes therefore stay within the data manifold, reducing hallucination.}
\label{fig:ty}
\end{figure}
\subsection{Why Interpolated FID Predicts Sample Quality: Intuition and Examples}
\textbf{Diffusion models generate unseen samples by interpolating training data.}
To understand why iFID is related to diffusion sample quality, we first revisit the training objective of diffusion models. Specifically, for a fixed training dataset $z^{(1:M)}$, the score matching objective admits a closed-form optimum known as the empirical score.
\begin{lemma}[Denoising score matching has an optimal solution {\citep{Song2025SelectiveUI}}]
Given the forward diffusion process $z_t=\alpha_t z + \sigma_t\epsilon$ and a training set $\{z^{(1)},\ldots,z^{(M)}\}$, denoising score matching in Eq.~\ref{eq:dsm} has the following empirical optimal solution:
\begin{gather}
    s_*(z_t,t) = \frac{1}{\sigma_t^2}(-z_t+\alpha_t\sum_{i=1}^Mw^{(i)}z^{(i)}),\textup{ where } w^{(i)} = \frac{\exp{(-||z_t-\alpha_tz^{(i)}||^2/2\sigma_t^2)}}{\sum_{j=1}^M\exp{(-||z_t-\alpha_tz^{(j)}||^2/2\sigma_t^2)}}. \label{eq:odsm}
\end{gather}
\end{lemma}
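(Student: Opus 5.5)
The plan is to treat the denoising score matching objective in Eq.~\ref{eq:dsm} as a pointwise least-squares problem in $z_t$. With the training set fixed, the data distribution is the empirical measure $\frac{1}{M}\sum_i \delta_{z^{(i)}}$. The joint law of $(z,z_t)$ is then a mixture: $z=z^{(i)}$ with probability $1/M$, and $z_t\mid z \sim \mathcal{N}(\alpha_t z,\sigma_t^2 I)$. For each fixed $t$, the objective is $\mathbb{E}_{z,z_t}\|s_\theta(z_t,t)-\nabla_{z_t}\log q(z_t|z)\|^2$. The class of score functions is unrestricted (any measurable function of $(z_t,t)$). By the standard $L^2$ projection argument, the minimizer is therefore the conditional expectation
\begin{gather}
s_*(z_t,t)=\mathbb{E}\left[\nabla_{z_t}\log q(z_t|z)\,\middle|\,z_t\right].
\end{gather}
Concretely, we expand the square around this conditional mean. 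The cross term vanishes by the tower property, so any other $s_\theta$ incurs an additional nonnegative term $\mathbb{E}\|s_\theta-s_*\|^2$.

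Next we compute both ingredients explicitly. The Gaussian kernel gives $\nabla_{z_t}\log q(z_t|z)=-(z_t-\alpha_t z)/\sigma_t^2$. The conditional expectation is linear, so
\begin{gather}
s_*(z_t,t)=\frac{1}{\sigma_t^2}\bigl(-z_t+\alpha_t\,\mathbb{E}[z\mid z_t]\bigr).
\end{gather}
It remains to compute the posterior over the $M$ atoms by Bayes' rule. We have $P(z=z^{(i)}\mid z_t)\propto \frac{1}{M}\,\mathcal{N}(z_t;\alpha_t z^{(i)},\sigma_t^2 I)$. The uniform prior and the Gaussian normalizing constant $(2\pi\sigma_t^2)^{-d/2}$ are the same for every $i$, so they cancel. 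What remains is exactly the softmax weights $w^{(i)}$ of Eq.~\ref{eq:odsm}, and hence $\mathbb{E}[z\mid z_t]=\sum_i w^{(i)}z^{(i)}$. This yields the claimed formula.

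As a cross-check, we can verify independently that this equals $\nabla_{z_t}\log q(z_t)$ for the mixture marginal $q(z_t)=\frac{1}{M}\sum_i \mathcal{N}(z_t;\alpha_t z^{(i)},\sigma_t^2 I)$. Differentiating the log of the sum produces the same weighted average of the per-component scores.

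The main obstacle is minor and concerns rigor rather than computation. We must state that the expectation in Eq.~\ref{eq:dsm} is taken over the empirical training distribution. We must also note that the optimality holds for each $t$ separately, so it holds for any time weighting. Finally, we must note that the minimizer is unique only $q(z_t)$-almost everywhere. Since the Gaussian marginal has full support, this makes $s_*$ unique everywhere among continuous minimizers.
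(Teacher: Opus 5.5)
Your proposal is correct and complete. The $L^2$-projection argument identifies the unrestricted minimizer of the DSM objective as $\mathbb{E}[\nabla_{z_t}\log q(z_t\mid z)\mid z_t]$. Bayes' rule over the $M$ equally weighted atoms then produces exactly the softmax weights $w^{(i)}$. The one ingredient you leave implicit, square-integrability of the regression target, holds because $\mathbb{E}\|\nabla_{z_t}\log q(z_t\mid z)\|^2 = d/\sigma_t^2<\infty$ for each $t$ with $\sigma_t>0$.

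The paper does not prove this lemma itself; it imports it from the cited reference, and its appendix proves only the two ridge-set theorems. There is therefore no in-paper argument to compare against, and your derivation is the standard one and suffices.
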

It is well established that when the score estimator $s_\theta(z_t, t)$ matches the empirical optimal score $s_*(z_t, t)$, a diffusion model tends to replicate samples from the training dataset \citep{Kamb2024AnAT,Bonnaire2025WhyTraining,Buchanan2025OnTE}. Existing literature suggests that the generalization ability of diffusion models, namely their capacity to generate unseen samples, arises from a mismatch between the learned score and the empirical score \citep{Kadkhodaie2023GeneralizationID,Somepalli2022DiffusionAO,Yoon2023DiffusionMemorize,Buchanan2025OnTE,Song2025SelectiveUI}.

How are newly generated samples related to the training dataset? Prior studies suggest that diffusion-generated samples can often be interpreted as interpolations or compositions of training images \citep{Okawa2023CompositionalAE}. In particular, \citep{Kamb2024AnAT,Somepalli2022DiffusionAO} show that diffusion samples can behave as local combinations of training examples, while \citep{Aithal2024UnderstandingHI,Deschenaux2024GoingBC,ChandranC2025LaplacianSS} demonstrate that diffusion models can interpolate between modes of the training distribution.

\textbf{iFID measures the validity of interpolated data.}
Under the hypothesis that diffusion models generate new samples by interpolating and composing images from the training dataset, iFID should intuitively correlate with diffusion sample quality because it evaluates the validity of interpolated latent representations. A low iFID indicates that decoded interpolated latents are close to the source data distribution in FID, suggesting that diffusion samples produced through similar interpolations are more likely to remain on the data manifold. Thus, a latent space with low iFID can help reduce hallucinations caused by mode interpolation \citep{Aithal2024UnderstandingHI,ChandranC2025LaplacianSS}.

\textbf{Toy example with a 2D Gaussian mixture.}
To better illustrate the relationship between iFID and sample quality, we provide a toy example with two different latent-space geometries in Figure~\ref{fig:ty}. We first consider a 25-component Gaussian mixture arranged on a two-dimensional square grid, where the components are well separated. This isolated latent distribution has poor iFID because the interpolated samples $\hat{z}$ lie off the data manifold. Consequently, diffusion samples that interpolate between nearby modes also fall outside the data manifold, resulting in hallucinations. In contrast, a connected 25-component Gaussian mixture has good iFID because the interpolated samples $\hat{z}$ remain on the data manifold. Since the latent distribution is connected, diffusion samples are less likely to leave the data manifold, thereby reducing hallucinations.

\subsection{Why Interpolated FID Predicts Sample Quality: Ridge-Set Theory}
Beyond the preceding intuition and examples, we provide a theoretical perspective on why iFID predicts sample quality by connecting it to ridge-set theory \citep{He2026DiffusionMG}. Prior work \citep{He2026DiffusionMG} shows that diffusion samples concentrate around a manifold known as the ridge set, which is determined by the Jacobian of the optimal score function. We show that, under a local two-nearest-neighbor approximation in the small-noise regime, the ridge set near a location $z_t$ is approximately aligned with the line connecting the two nearest neighbors of $z_t$ in the training set. Since iFID evaluates decoded interpolations along these local directions, it provides a theoretical proxy for diffusion sample quality.

\begin{figure}[thb]
\centering
    \includegraphics[width=0.5\linewidth]{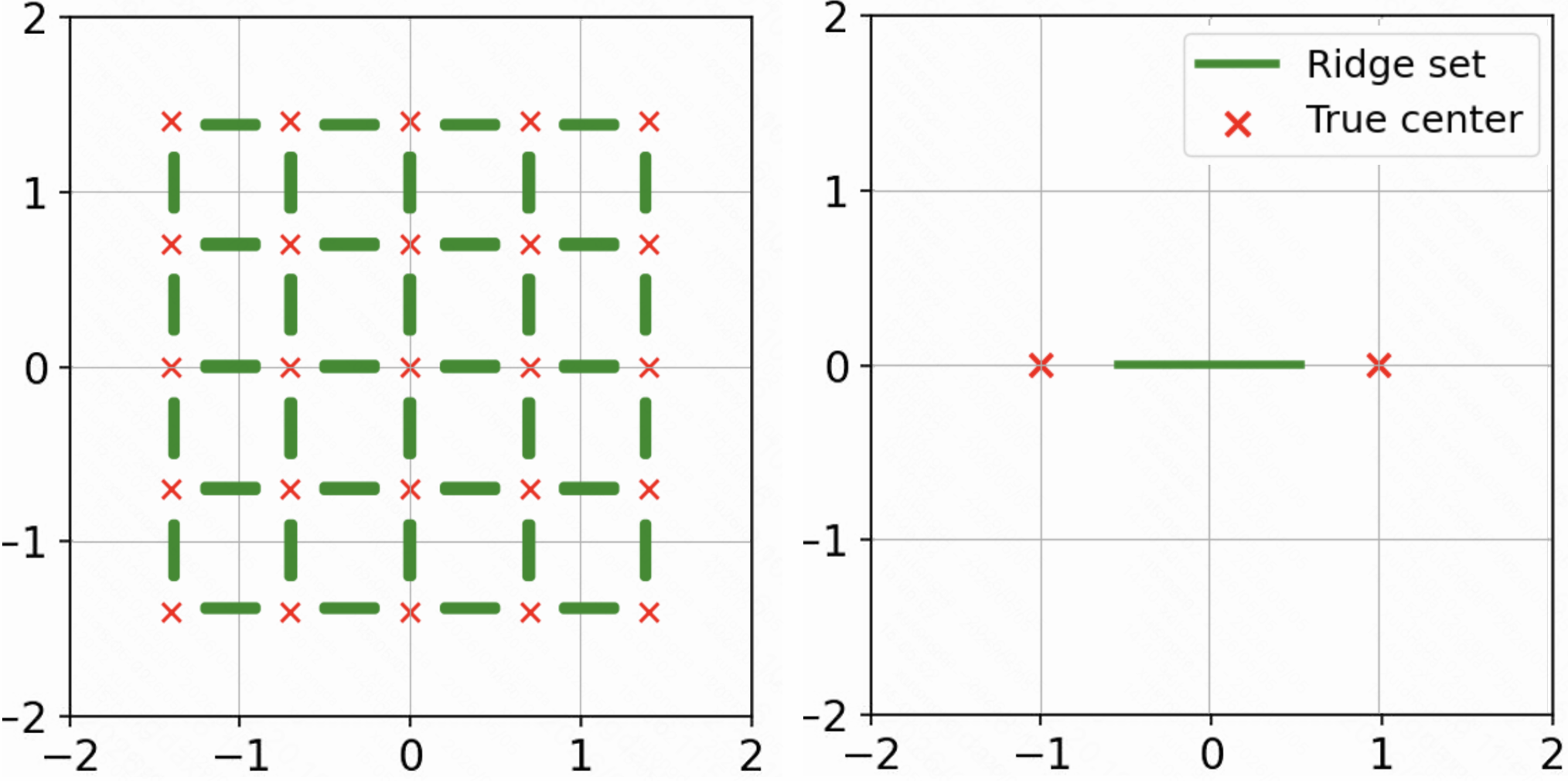}
\caption{Ridge sets of the source distributions $p(z)$ from Figure~\ref{fig:ty} and Figure~\ref{fig:ty2}, computed with $\beta=0.0$ and $t=0.1$. The ridge sets approximate the manifolds around which diffusion samples concentrate and visually resemble lines connecting nearby training samples.}
\label{fig:ty3}
\end{figure}

The ridge set is motivated by the observation that the learned score estimator $s_{\theta}(z_t,t)$ may capture only a low-rank approximation to the Jacobian of the empirical score $s_*(z_t,t)$. This Jacobian discrepancy prevents diffusion samples from simply replicating the training set and instead causes them to concentrate around a manifold known as the ridge set.
\begin{lemma}[Diffusion samples concentrate around the ridge set determined by the Jacobian {\citep{He2026DiffusionMG}}]
Diffusion samples $z_t$ concentrate around the ridge set
\begin{gather}
    \mathcal{R}(\beta,t) = \{z_t|V(z_t,t,\beta)V(z_t,t,\beta)^Ts_*(z_t) = 0\},
\end{gather}
where $\beta$ is a hyperparameter and $V(z_t,t,\beta)$ is the matrix whose columns are the $d-d^*$ eigenvectors of the Jacobian $J$ corresponding to the smallest $d-d^*$ eigenvalues $-\beta \ge \lambda_{d^*+1} \ge \cdots \ge \lambda_d$. The Jacobian is defined as
\begin{gather}
    J := \nabla s_*(z_t,t) = -\frac{1}{\sigma_t^2}I + \frac{\alpha_t^2}{\sigma_t^4} \sum_{i=1}^M w^{(i)}(z^{(i)}-\mu)(z^{(i)}-\mu)^T,
    \textup{where } \mu = \sum_{j=1}^M w^{(j)}z^{(j)}.
\end{gather}
\end{lemma}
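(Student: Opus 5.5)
The statement has two parts, and I would prove them separately. The first is the closed form of the Jacobian $J$, a direct calculation from the empirical score in Eq.~\ref{eq:odsm}. The second is the concentration of samples around $\mathcal{R}(\beta,t)$, a dynamical argument that follows \citet{He2026DiffusionMG}.

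\textbf{Step 1 (Jacobian).} Write $s_*(z_t,t)=\sigma_t^{-2}\bigl(-z_t+\alpha_t\sum_i w^{(i)}z^{(i)}\bigr)$. The first term contributes $-\sigma_t^{-2}I$. For the second term, differentiate the softmax weights through their logarithm:
\begin{gather*}
\nabla_{z_t}\log w^{(i)} = -\frac{z_t-\alpha_t z^{(i)}}{\sigma_t^2}+\sum_j w^{(j)}\frac{z_t-\alpha_t z^{(j)}}{\sigma_t^2}=\frac{\alpha_t}{\sigma_t^2}\bigl(z^{(i)}-\mu\bigr).
\end{gather*}
Hence
\begin{gather*}
\nabla_{z_t}\Bigl(\sum_i w^{(i)}z^{(i)}\Bigr)=\frac{\alpha_t}{\sigma_t^2}\sum_i w^{(i)}z^{(i)}\bigl(z^{(i)}-\mu\bigr)^T .
\end{gather*}
Since $\sum_i w^{(i)}(z^{(i)}-\mu)=0$, we may replace $z^{(i)}$ by $z^{(i)}-\mu$ in the left factor, which gives the weighted covariance. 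Multiplying by $\alpha_t/\sigma_t^2$ yields exactly the stated $J$. In particular $J$ is symmetric, so the eigenvectors forming $V$ are well defined and orthonormal.

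\textbf{Step 2 (concentration).} Consider the reverse-time dynamics driven by a learned score $s_\theta$. The modelling assumption, taken from \citet{He2026DiffusionMG}, is that $s_\theta$ reproduces $s_*$ and its Jacobian in the strongly contracting directions (eigenvalues $\le-\beta$) but only a low-rank or smoothed version in the remaining $d^*$ directions. I would introduce the Lyapunov-type quantity
\begin{gather*}
F(z_t)=\|V V^T s_*(z_t,t)\|^2,
\end{gather*}
which vanishes exactly on $\mathcal{R}(\beta,t)$. Along the reverse flow the drift contains the term $g(t)^2 s_\theta$. Differentiating $VV^Ts_*$ along this drift gives $VV^T J\,VV^T s_*$ plus correction terms. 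Because every eigenvalue of $J$ restricted to $\mathrm{span}(V)$ is at most $-\beta$, the main term yields
\begin{gather*}
\frac{dF}{d\tau}\le -2\beta g^2 F+\text{(error)},
\end{gather*}
where $\tau$ is reverse time. A Grönwall argument then shows that $F$, and hence the normal distance to the ridge set to first order, decays exponentially until it reaches an error floor. That floor is set by the noise level and the score approximation error, which is what ``concentrate around'' means here.

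\textbf{Main obstacle.} The error terms come from the time and space derivatives of the projector $VV^T$ and from the explicit $t$-dependence of $s_*$. Controlling $\nabla(VV^T)$ requires an eigengap between $\lambda_{d^*}$ and $\lambda_{d^*+1}$. I would handle this with a Davis--Kahan bound: the derivative of the projector is bounded by $\|\nabla J\|$ divided by the gap. $\nabla J$ involves third moments of the weights and scales like $\alpha_t^3\sigma_t^{-6}$, so the estimate is only useful in a regime where $\beta g^2$ dominates this term. I would therefore state the result under an explicit eigengap assumption and a small-noise condition, as \citet{He2026DiffusionMG} do, rather than unconditionally.
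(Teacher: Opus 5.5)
The paper does not prove this lemma. It is imported from \citep{He2026DiffusionMG}. The appendix proof that carries its title actually establishes the two-point result of Theorem~4: for $J_2=-\sigma_t^{-2}I+\frac{\alpha_t^2}{\sigma_t^4}\tilde{w}^{(i^*)}\tilde{w}^{(j^*)}\delta\delta^T$ and $\beta=\sigma_t^{-2}$, $V$ spans $\delta^\perp$ and the ridge condition reduces to $\alpha_t\tilde{\mu}-z_t\in\mathrm{span}(\delta)$. That is a static description of $\mathcal{R}$ with no dynamics, so the paper contains nothing to compare your Step 2 against. Your Step 1 is correct and supplies the derivation of $J$ that the paper only asserts. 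Both the identity $\nabla_{z_t}\log w^{(i)}=\frac{\alpha_t}{\sigma_t^2}(z^{(i)}-\mu)$ and the centring step are right. You are also right that ``concentrate'' cannot be derived from the statement alone, since it places no hypothesis on $s_\theta$.

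Step 2 fails as written, for a reason independent of those missing hypotheses: $F=\|VV^Ts_*\|^2$ is not a Lyapunov function. The explicit $t$-dependence of $s_*=(\alpha_t\mu-z_t)/\sigma_t^2$ is not a lower-order correction. In reverse time, $-\partial_t$ acting on the prefactor $\sigma_t^{-2}$ contributes $+(2\dot{\sigma}_t/\sigma_t)\,VV^Ts_*$ to $\frac{d}{d\tau}VV^Ts_*$. At the relevant scale $\beta\approx\sigma_t^{-2}$, the value used in Theorem~4, this term has the same order as your contraction rate $g^2\beta$ and the opposite sign.

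The simplest case already breaks the argument. Take $M=1$ and a VE schedule ($\alpha_t=1$, $g^2=d\sigma_t^2/dt$). Then $J=-\sigma_t^{-2}I$, so $\mathcal{R}=\{z^{(1)}\}$ for every $\beta\le\sigma_t^{-2}$. The exact probability-flow ODE gives $z_t-z^{(1)}=(\sigma_t/\sigma_T)(z_T-z^{(1)})$, so samples do concentrate on the ridge set. Yet $F=\|z_T-z^{(1)}\|^2/(\sigma_T^2\sigma_t^2)\to\infty$. For the reverse SDE the two drift contributions cancel exactly, and the It\^o term gives $\mathbb{E}F=d/\sigma_t^2$ (with $d$ the latent dimension), which again grows. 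Hence no inequality of the form $dF/d\tau\le-2\beta g^2F+(\text{bounded error})$ holds, and the Gr\"onwall step collapses.

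The repair is to track the projected denoising residual $R=\sigma_t^2VV^Ts_*=VV^T(\alpha_t\mu-z_t)$ instead. Since $\nabla_{z_t}(\alpha_t\mu-z_t)=\sigma_t^2J$, under your assumption $VV^Ts_\theta\approx VV^Ts_*$ the spatial part of $dR/d\tau$ yields the contraction $g^2JR$ (half that for the ODE), with no compensating $\sigma_t^{-2}$ term. The normal distance is then at most about $\|R\|/(\sigma_t^2\beta)$. What remains to control is $\partial_t(\alpha_t\mu)$, $\partial_t(VV^T)$ and the linear drift $-f$, together with the eigengap issue you already flagged.
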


Figure~\ref{fig:ty3} shows examples of ridge sets for the 25-component and 2-component Gaussian mixtures. We observe that these ridge sets often resemble lines connecting nearby training samples. Although this observation is consistent with the intuition that diffusion models interpolate between nearby modes \citep{Aithal2024UnderstandingHI,ChandranC2025LaplacianSS}, its theoretical grounding remains limited. We provide such an explanation by showing that, in the small-$t$ regime, the ridge set near a location $z_t$ is close to the line connecting the two nearest neighbors of $z_t$ in the training set.

\begin{theorem}[The Jacobian is close to the two-point Jacobian when $t$ is small]
When $t$ is small, the full Jacobian is close to the Jacobian determined by
$z^{(i^*)}$ and $z^{(j^*)}$, the two nearest-neighbor training samples to $z_t$.
Assume that $B=\max_k\|z^{(k)}\| < \infty$ and $\Delta = \min_{k\neq i^*,j^*} ||z_t-\alpha_tz^{(k)}||^2 - ||z_t-\alpha_tz^{(j^*)}||^2 > 0$. Then,
\begin{align}
    \|J-J_2\| &\le \frac{\alpha_t^2}{\sigma_t^4}
    \left(9B^2(M-2)\right)\exp{\left(-\frac{\Delta}{2\sigma_t^2}\right)},\notag \\ 
    \textup{where } J_2 &:= -\frac{1}{\sigma_t^2}I
    + \frac{\alpha_t^2}{\sigma_t^4}\sum_{k=i^*,j^*}\tilde{w}^{(k)}
    (z^{(k)}-\tilde{\mu})(z^{(k)}-\tilde{\mu})^T.
\end{align}
Here, $J_2$ is the Jacobian of the optimal score for the two-point training set $\{z^{(i^*)},z^{(j^*)}\}$, where 
\begin{gather}
    \tilde{w}^{(k)} = \frac{w^{(k)}}{\sum_{l=i^*,j^*}w^{(l)}},\tilde{\mu}=\sum_{k=i^*,j^*}\tilde{w}^{(k)}z^{(k)}.
\end{gather}
\end{theorem}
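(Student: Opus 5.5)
The plan is to write $J - J_2$ as $\frac{\alpha_t^2}{\sigma_t^4}(C - C_2)$, where
\[
C = \sum_{k=1}^M w^{(k)}(z^{(k)}-\mu)(z^{(k)}-\mu)^T
\quad\text{and}\quad
C_2 = \sum_{k=i^*,j^*}\tilde w^{(k)}(z^{(k)}-\tilde\mu)(z^{(k)}-\tilde\mu)^T
\]
are the weighted covariances; the $-\frac{1}{\sigma_t^2}I$ terms cancel. It then suffices to show $\|C - C_2\| \le 9B^2(M-2)\varepsilon$, where $\varepsilon := \sum_{k\ne i^*,j^*} w^{(k)}$ is the tail mass, and to bound $\varepsilon$ separately.

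\textbf{Step 1 (tail mass).} Let $d_k := \|z_t-\alpha_t z^{(k)}\|^2$. For $k\ne i^*,j^*$ we have $d_k \ge d_{j^*}+\Delta \ge d_{i^*}+\Delta$. Keeping only the $i^*$ term in the denominator gives
\[
w^{(k)} \le \exp\!\left(-\frac{d_k-d_{i^*}}{2\sigma_t^2}\right) \le \exp\!\left(-\frac{\Delta}{2\sigma_t^2}\right),
\qquad\text{hence}\qquad
\varepsilon \le (M-2)\exp\!\left(-\frac{\Delta}{2\sigma_t^2}\right).
\]

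\textbf{Step 2 (covariance perturbation).} Let $W := w^{(i^*)}+w^{(j^*)} = 1-\varepsilon$, so $\tilde w^{(k)} = w^{(k)}/W$. Write $\mu = W\tilde\mu + \sum_{k\ne i^*,j^*} w^{(k)} z^{(k)}$. This gives
\[
\mu - \tilde\mu = \sum_{k\ne i^*,j^*} w^{(k)}\,(z^{(k)}-\tilde\mu),
\qquad
\|\mu-\tilde\mu\| \le 2B\varepsilon,
\]
since every point and every convex combination has norm at most $B$. Next use the identity $C = \sum_k w^{(k)} z^{(k)}z^{(k)T} - \mu\mu^T$, and similarly for $C_2$. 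Then split $C - C_2$ into three pieces:
\begin{itemize}
\item[(a)] the tail second moments $\sum_{k\ne i^*,j^*} w^{(k)} z^{(k)}z^{(k)T}$, with norm at most $B^2\varepsilon$;
\item[(b)] the reweighting discrepancy $\sum_{k=i^*,j^*}(w^{(k)}-\tilde w^{(k)})z^{(k)}z^{(k)T}$, with $\sum_{k=i^*,j^*}|w^{(k)}-\tilde w^{(k)}| = 1-W = \varepsilon$, so norm at most $B^2\varepsilon$;
\item[(c)] the mean term $\tilde\mu\tilde\mu^T - \mu\mu^T = (\tilde\mu-\mu)\tilde\mu^T + \mu(\tilde\mu-\mu)^T$, with norm at most $2B\cdot 2B\varepsilon = 4B^2\varepsilon$.
\end{itemize}
The total is $6B^2\varepsilon \le 9B^2\varepsilon$, which, combined with Step 1, yields the stated bound. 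The constant 9 leaves slack for a cruder route: expanding around $\mu$ directly, the cross terms give a few more factors of $B^2$.

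\textbf{Main obstacle.} The only delicate point is bookkeeping the renormalization, i.e.\ that the $\tilde w$ and $w$ weights differ by a factor $1/(1-\varepsilon)$. The mass-conservation identity $\sum_{k=i^*,j^*}|w^{(k)}-\tilde w^{(k)}| = \varepsilon$ handles this without requiring $\varepsilon<1/2$. Everything else is triangle inequalities with the operator norm and $\|ab^T\|\le\|a\|\|b\|$.
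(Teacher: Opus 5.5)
Your proof is correct, but it reaches the bound by a different decomposition than the paper's.

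\textbf{What is shared.} Both arguments reduce to $\frac{\alpha_t^2}{\sigma_t^4}(C-C_2)$ and use the same tail-mass estimate $\varepsilon\le(M-2)e^{-\Delta/2\sigma_t^2}$.

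\textbf{The paper's route.} The paper works with centered quantities. It splits $C$ into a pair block $C_*$ and a tail block, both centered at $\mu$, and bounds the tail block by $4B^2\varepsilon$. It then derives the mean-shift identity $C_*=W(C_2+rr^T)$, with $r=\mu-\tilde\mu$ and $C_2=\tilde w^{(i^*)}\tilde w^{(j^*)}\delta\delta^T$. Hence $C_*-C_2=-\varepsilon\,C_2+Wrr^T$, which contributes $B^2\varepsilon+4B^2\varepsilon$, for a total constant of $9$.

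\textbf{Your route.} You pass to raw second moments via $C=\sum_k w^{(k)}z^{(k)}z^{(k)T}-\mu\mu^T$. This turns the renormalization into a one-line mass-conservation argument and skips the pair-block algebra entirely. In that algebra the paper's displayed sign of the $\varepsilon\,C_2$ term is actually flipped, which is harmless for a norm bound. You also get the sharper constant $6$.

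\textbf{What the paper's route buys in exchange.}
\begin{itemize}
\item The formula $C_2=\tilde w^{(i^*)}\tilde w^{(j^*)}\delta\delta^T$ is exactly what the next theorem uses to read off the spectrum of $J_2$.
\item The $rr^T$ term is visibly $O(\varepsilon^2)$.
\item Every quantity it bounds ($\|z^{(k)}-\mu\|$, $\|\delta\|$, $\|r\|/\varepsilon$) is controlled by the diameter of the point set, so the argument is translation invariant.
\end{itemize}
Your raw moments depend on where the origin sits. You can repair this by first shifting all $z^{(k)}$ by a common vector, since the weighted covariances are unchanged, but it is not automatic.

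\textbf{Small addition.} To justify the informal ``when $t$ is small'' clause, add the paper's closing remark that $\sigma_t^{-4}e^{-\Delta/2\sigma_t^2}\to0$ as $\sigma_t\to0$, provided $\Delta$ stays bounded away from zero.
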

\begin{theorem}[The ridge set of the two-point Jacobian is aligned with the line connecting the two points]
\label{thm:rl}
In the two-point approximation, the ridge set induced by $J_2$ is the line connecting
$z^{(i^*)},z^{(j^*)}$:
\begin{gather}
    \mathcal{R}\left(\beta=\frac{1}{\sigma_t^2},t\right)
    =
    \left\{z_t \mid z_t = (1-\gamma)z^{(i^*)} + \gamma z^{(j^*)},\ \gamma \in \mathbb{R}\right\}.
\end{gather}
Moreover, the nontrivial eigenvalue of $J_2$ is maximized at $\gamma=0.5$ and $z_t=(z^{(i^*)}+z^{(j^*)})/2$.
\end{theorem}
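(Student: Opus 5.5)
The plan is to compute $J_2$ explicitly, read off its spectrum, identify $V$, and then solve the ridge equation directly. Write $a=z^{(i^*)}$, $b=z^{(j^*)}$, $u=a-b$, and $w=\tilde w^{(i^*)}$, so $\tilde w^{(j^*)}=1-w$ and $\tilde\mu=wa+(1-w)b$.

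\textbf{Step 1: rank-one form of $J_2$.} We have $a-\tilde\mu=(1-w)u$ and $b-\tilde\mu=-wu$. The two-point weighted covariance therefore collapses to
\begin{gather*}
w(1-w)^2uu^T+(1-w)w^2uu^T=w(1-w)\,uu^T .
\end{gather*}
Hence
\begin{gather*}
J_2=-\frac{1}{\sigma_t^2}I+\frac{\alpha_t^2}{\sigma_t^4}\,w(1-w)\,uu^T .
\end{gather*}
Its spectrum is immediate. Every vector orthogonal to $u$ is an eigenvector with eigenvalue $-1/\sigma_t^2$, with multiplicity $d-1$. The direction $u$ carries the single nontrivial eigenvalue
\begin{gather*}
\lambda_u=-\frac{1}{\sigma_t^2}+\frac{\alpha_t^2}{\sigma_t^4}w(1-w)\|u\|^2 .
\end{gather*}
This is strictly larger than $-1/\sigma_t^2$ whenever $a\neq b$, because $w\in(0,1)$.

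\textbf{Step 2: identify $V$.} With $\beta=1/\sigma_t^2$, the eigenvalues satisfying $\lambda\le-\beta$ are exactly the $d-1$ eigenvalues attached to $u^\perp$. The eigenvalue $\lambda_u$ is excluded, so $d^*=1$. Thus $VV^T=P:=I-uu^T/\|u\|^2$, the orthogonal projector onto $u^\perp$. This choice does not depend on which basis of the degenerate eigenspace is taken.

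\textbf{Step 3: solve the ridge equation.} The relevant score is the two-point score $s_*=\sigma_t^{-2}(-z_t+\alpha_t\tilde\mu)$. The condition $Ps_*=0$ is equivalent to $z_t-\alpha_t\tilde\mu\in\mathrm{span}(u)$. Since $\alpha_t\tilde\mu$ already lies on the line $\{\alpha_t((1-\gamma)a+\gamma b)\}$, which has direction $u$, the condition says exactly that $z_t$ lies on that line.

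Conversely, every point of that line satisfies the condition. This holds regardless of the particular value of $w(z_t)$, since $\tilde\mu$ always lies on the line.

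This gives the ridge set as the line through $a$ and $b$ in coordinates rescaled by $\alpha_t$. The rescaling is exactly the identity for variance-exploding schedules, and it is $\approx 1$ in the small-$t$ regime where the two-point approximation is justified. I would state this explicitly, matching the theorem in the $\alpha_t\to1$ (or rescaled $z_t/\alpha_t$) sense.

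\textbf{Step 4: maximize the nontrivial eigenvalue.} On the line, $\lambda_u$ depends on $z_t$ only through $w(1-w)$, which is maximized uniquely at $w=1/2$. For $z_t=\alpha_t((1-\gamma)a+\gamma b)$, we get
\begin{gather*}
\|z_t-\alpha_t a\|^2-\|z_t-\alpha_t b\|^2=\alpha_t^2\|u\|^2(2\gamma-1),
\end{gather*}
so
\begin{gather*}
w=\frac{1}{1+\exp\!\big(\alpha_t^2\|u\|^2(2\gamma-1)/2\sigma_t^2\big)} .
\end{gather*}
This equals $1/2$ iff $\gamma=1/2$, i.e. at the midpoint $(a+b)/2$ (again up to the $\alpha_t$ scaling).

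\textbf{Main obstacle.} The substantive delicacy is not the algebra but the boundary choice $\beta=1/\sigma_t^2$. The retained eigenvalues equal $-\beta$ exactly, so the non-strict inequality in the ridge-set definition must be used to include all of $u^\perp$. I would also flag the degenerate case $a=b$. A secondary point is reconciling the $\alpha_t$ factor with the statement as written.
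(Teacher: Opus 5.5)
Your proposal is correct and follows the same route as the paper's proof. Both arguments reduce $J_2$ to $-\sigma_t^{-2}I+\frac{\alpha_t^2}{\sigma_t^4}\tilde w^{(i^*)}\tilde w^{(j^*)}\delta\delta^T$, observe that with $\beta=1/\sigma_t^2$ the columns of $V$ span $\delta^\perp$ so that the ridge condition becomes $z_t-\alpha_t\tilde\mu\in\mathrm{span}(\delta)$, invoke $\alpha_t\approx 1$, and maximize $\tilde w^{(i^*)}\tilde w^{(j^*)}$ at the midpoint; your version is slightly tighter in that it uses the correct $\|\delta\|^2$ in the nontrivial eigenvalue (the paper writes $\|\delta\|$) and your explicit formula for $w(\gamma)$ shows $\gamma=1/2$ is the unique maximizer, which the paper only asserts.
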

Theorem~\ref{thm:rl} provides a theoretical interpretation of why iFID correlates well with diffusion sample quality. In the local two-point approximation, the interpolated latent $\hat{z}^{(i)}$ lies on the approximate ridge set. Thus, iFID evaluates decoded samples from a manifold around which diffusion samples tend to concentrate. Moreover, at the midpoint $(z^{(i^*)}+z^{(j^*)})/2$, the only nontrivial eigenvalue of $J_2$ attains its maximum value. This is consistent with our observation that iFID achieves its strongest correlation with gFID when using midpoint interpolation between nearby training samples.
\begin{figure}[thb]
\centering
    \includegraphics[width=\linewidth]{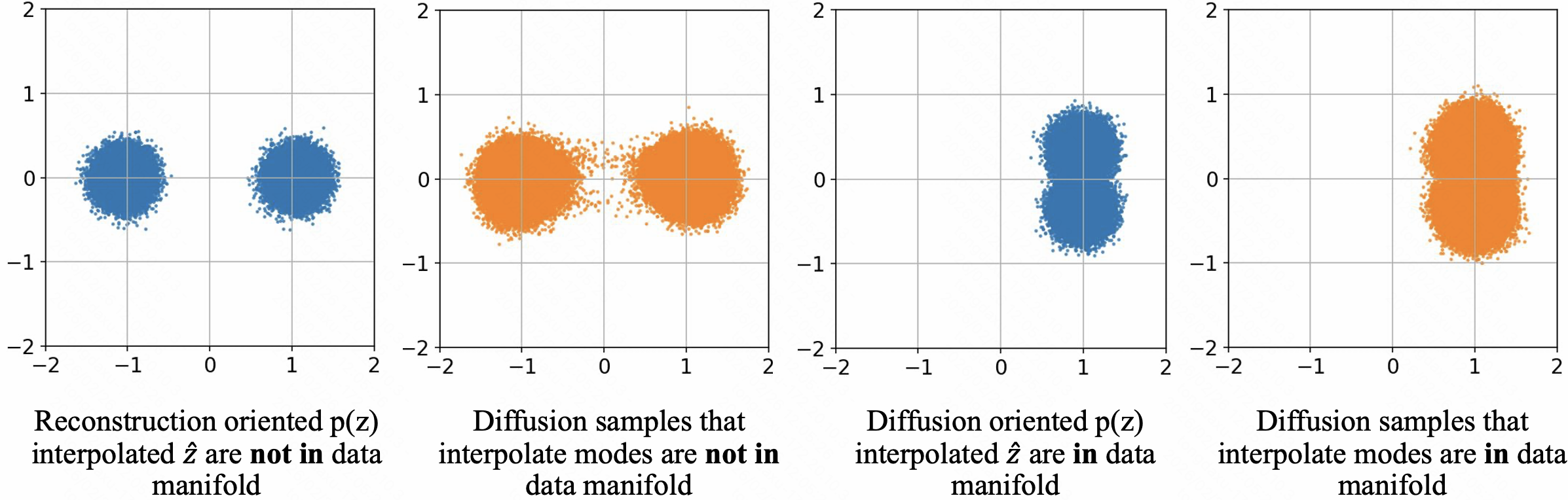}
\caption{Toy example illustrating the difference between reconstruction-oriented and diffusion-oriented latent spaces. \textit{Left two plots}: The latent distribution is an isolated two-mode Gaussian mixture with poor interpolatability, since the interpolated latent $\hat{z}$ does not lie on the data manifold. In this case, diffusion samples interpolating between nearby modes also fall outside the data manifold, leading to hallucinations. \textit{Right two plots}: The latent distribution is an overlapping two-mode Gaussian mixture with good interpolatability, as the interpolated latent $\hat{z}$ remains on the data manifold. Consequently, diffusion samples interpolating between nearby modes stay within the data manifold, reducing hallucinations.}
\label{fig:ty2}
\end{figure}
\subsection{Why Reconstruction Can Correlate Negatively with Sample Quality}
The preceding analysis suggests that diffusion models benefit from interpolatable and connected latent spaces, since such spaces allow interpolated samples to remain close to the data manifold. This perspective also helps explain why reconstruction metrics can be negatively correlated with diffusion sample quality. Reconstruction-based objectives tend to favor latent spaces in which different inputs are well separated, making it easier for the decoder to recover each input accurately. However, such separated or disconnected latent spaces can be unfavorable for diffusion generation, since samples interpolating between nearby modes may leave the data manifold. This tension gives rise to the so-called \say{reconstruction--generation dilemma} \citep{Yao2025ReconstructionVG}.

To clarify this intuition, we provide a toy example with two Gaussian mixture models (GMMs) in Figure~\ref{fig:ty2}. Specifically, we consider a fully factorized Gaussian VAE trained on two data points $x^{(1:2)}$, with fixed posterior variance $\sigma^2$ and KL divergence. We study the problem of optimizing the posterior means $\mu^{(i)}$ to minimize the reconstruction loss. Minimizing reconstruction error encourages the posterior means $\mu^{(i)}$ to be as separable as possible, making it easier for the decoder to associate each input $x^{(i)}$ with its corresponding posterior sample $z^{(i)}$. However, such a separable latent space can be unfavorable for diffusion generation. As shown in Figure~\ref{fig:ty2}, diffusion samples may interpolate between the two separated modes, and these intermediate samples can fall outside the data manifold, resulting in hallucinations. In contrast, when the latent modes substantially overlap, interpolated diffusion samples are more likely to remain on the data manifold, reducing hallucinations. This comes at the cost of reconstruction quality, since overlapping latent modes are harder for the decoder to distinguish.
\begin{table}[thb]
\caption{VAEs included in our study. We evaluate 13 VAEs with publicly available checkpoints and train corresponding diffusion models in their respective latent spaces.}
\label{tab:vaes}
\centering
\resizebox{0.85\linewidth}{!}{
\begin{tabular}{@{}lccc@{}}
\toprule
VAE Name & Latent Dim. & Arch. & Training  \\ \midrule
SD-VAE \citep{rombach2022high} & $4\times 32 \times 32$ & UNet & Recon.  \\
IN-VAE \citep{Yao2025ReconstructionVG} & $32\times 16 \times 16$ & UNet & Recon.  \\
FLUX-VAE \citep{flux2024} & $16\times 32\times 32$ & UNet & Recon. \\
QwenImage-VAE \citep{wu2025qwen} & $16\times 32 \times 32$ & UNet & Recon. \\
SD3-VAE \citep{esser2024scaling} & $16\times 32 \times 32$ & UNet & Recon.  \\
EQ-VAE \citep{Kouzelis2025EQVAEER} & $4\times 32 \times 32$ & UNet & Recon. + Equivariance  \\
VA-VAE \citep{Yao2025ReconstructionVG} & $32\times 16 \times 16$ & UNet & Recon. + DINO alignment  \\
SOFT-VQ \citep{chen2025softvq} & $64\times 32$ & ViT & Recon. + DINO alignment  \\
MAE-TOK \citep{Chen2025MaskedAA} & $128\times 32$ & ViT & Recon. + Mask + DINO alignment  \\
DE-TOK \citep{yang2025latent} & $128\times 32$ & ViT & Recon. + Mask + latent denoising  \\
DM-VAE \citep{ye2025distribution} & $256\times 32$ & ViT & Recon. + Distribution matching  \\
REPAE-VAE \citep{Leng2025REPAEUV} & $4\times 32 \times 32$ & UNet & Recon. + REPA loss  \\
RAE \citep{zheng2025diffusion} & $768\times 16 \times 16$ & ViT & DINO encoder + recon. decoder  \\
\bottomrule
\end{tabular}
}
\end{table}

\section{Experimental Results}
\subsection{Experimental Setup}
\textbf{Dataset and metrics.}
We conduct experiments on the $256\times 256$ ImageNet dataset. All diffusion models are trained on the ImageNet training split, and all metrics are evaluated on the ImageNet validation split. As reconstruction metrics, we consider PSNR, LPIPS, SSIM, and rFID, which are standard metrics for evaluating VAEs. As non-reconstruction metrics, we consider diffusion loss, as well as objective functions designed to improve the diffusion performance of VAEs, including EQ Loss, SE Loss, VF Loss, and GMM Loss \citep{Kouzelis2025EQVAEER, Skorokhodov2025ImprovingTD, Yao2025ReconstructionVG, chen2025aligning}. To quantify how well each metric predicts diffusion sample quality, we report the Pearson correlation coefficient (PCC) and Spearman rank correlation coefficient (SRCC) between each metric and gFID.

\textbf{VAEs and diffusion models.}
We evaluate 13 representative VAEs with publicly available checkpoints; details are provided in Table~\ref{tab:vaes}. These VAEs cover diverse latent dimensions, architectures, and training objectives. Some, such as SD-VAE, IN-VAE, QW-VAE, FLUX-VAE, and SD3-VAE \citep{rombach2022high,Yao2025ReconstructionVG,flux2024,esser2024scaling,wu2025qwen}, are optimized solely for reconstruction. EQ-VAE additionally incorporates equivariance regularization \citep{Kouzelis2025EQVAEER}. Other VAEs, including VA-VAE, SOFT-VQ, MAE-TOK, DE-TOK, DM-VAE, REPAE-VAE, and RAE, employ contrastive-learning-based image encoders or representation-alignment objectives \citep{Yao2025ReconstructionVG,chen2025softvq,chen2025aligning,yang2025latent,ye2025distribution,Leng2025REPAEUV,zheng2025diffusion}.

For each VAE, we train two diffusion models of different sizes, SiT-B and SiT-XL, in the corresponding latent space. All diffusion models are trained for 40 epochs using the Adam optimizer. We report gFID both with and without classifier-free guidance (CFG).

\begin{table}[thb]
\caption{Correlation between VAE metrics and diffusion gFID. Correlations are computed using metrics obtained from SiT models trained with 13 pre-trained VAEs. EQ Loss and SE Loss are not evaluated for VAEs with 1D latents. Following \citep{zheng2025diffusion}, RAE is not evaluated with SiT-B. PCC denotes the Pearson correlation coefficient, and SRCC denotes Spearman's rank correlation coefficient. \textbf{Bold}: best. $^*$: $p<0.05$.}
\label{tab:resultfid}
\centering
\resizebox{0.9\linewidth}{!}{
\begin{tabular}{@{}lcccccccc@{}}
\toprule
\multirow{3}{*}{Metric} & \multicolumn{4}{c}{gFID SiT-B} & \multicolumn{4}{c}{gFID SiT-XL} \\ \cmidrule(lr){2-5}\cmidrule(lr){6-9} 
 & \multicolumn{2}{c}{w/o CFG} & \multicolumn{2}{c}{w/ CFG} & \multicolumn{2}{c}{w/o CFG} & \multicolumn{2}{c}{w/ CFG} \\ \cmidrule(lr){2-3}\cmidrule(lr){4-5}\cmidrule(lr){6-7}\cmidrule(lr){8-9} 
 & PCC$\uparrow$ & SRCC$\uparrow$ & PCC$\uparrow$ & SRCC$\uparrow$ & PCC$\uparrow$ & SRCC$\uparrow$ & PCC$\uparrow$ & SRCC$\uparrow$ \\ \midrule
\multicolumn{9}{@{}l@{}}{\textit{Reconstruction metrics}} \\
-PSNR & -0.81$^*$ & -0.81$^*$ & -0.83$^*$ & -0.82$^*$ & -0.79$^*$ & -0.78$^*$ & -0.79$^*$ & -0.85$^*$ \\
-SSIM & -0.78$^*$ & -0.81$^*$ & -0.80$^*$ & -0.84$^*$ & -0.77$^*$ & -0.78$^*$ & -0.77$^*$ & -0.85$^*$ \\
LPIPS & -0.73$^*$ & -0.74$^*$ & -0.72$^*$ & -0.76$^*$ & -0.73$^*$ & -0.72$^*$ & -0.73$^*$ & -0.79$^*$ \\
rFID & -0.04 & -0.31 & -0.07 & -0.31 & -0.06 & -0.21 & -0.15 & -0.31 \\
\midrule
\multicolumn{9}{@{}l@{}}{\textit{Non-reconstruction metrics}} \\
Diffusion Loss & 0.21 & 0.24 & 0.22 & 0.20 & 0.34 & 0.37 & 0.24 & 0.29 \\ 
EQ Loss \citep{Kouzelis2025EQVAEER} & -0.62 & -0.52 & -0.78$^*$ & -0.76$^*$ & -0.39 & -0.15 & -0.37 & -0.13 \\
SE Loss \citep{Skorokhodov2025ImprovingTD} & -0.70$^*$ & -0.70$^*$ & -0.75$^*$ & -0.83$^*$ & -0.77$^*$ & -0.70 & -0.79$^*$ & -0.79 \\
VF Loss \citep{Yao2025ReconstructionVG} & 0.10 & 0.05 & 0.09 & 0.03 & 0.17 & 0.11 & 0.09 & -0.01 \\
GMM Loss \citep{Chen2025MaskedAA} & 0.25 & 0.04 & 0.25 & 0.05 & 0.28 & 0.01 & 0.23 & -0.02 \\
iFID (ours) & \textbf{0.85}$^*$ & \textbf{0.86}$^*$ & \textbf{0.82}$^*$ & \textbf{0.84}$^*$ & \textbf{0.89}$^*$ & \textbf{0.91}$^*$ & \textbf{0.88}$^*$ & \textbf{0.92}$^*$ \\ \bottomrule
\end{tabular}
}
\end{table}
\begin{figure}[thb]
\centering
  \includegraphics[width=0.9\linewidth]{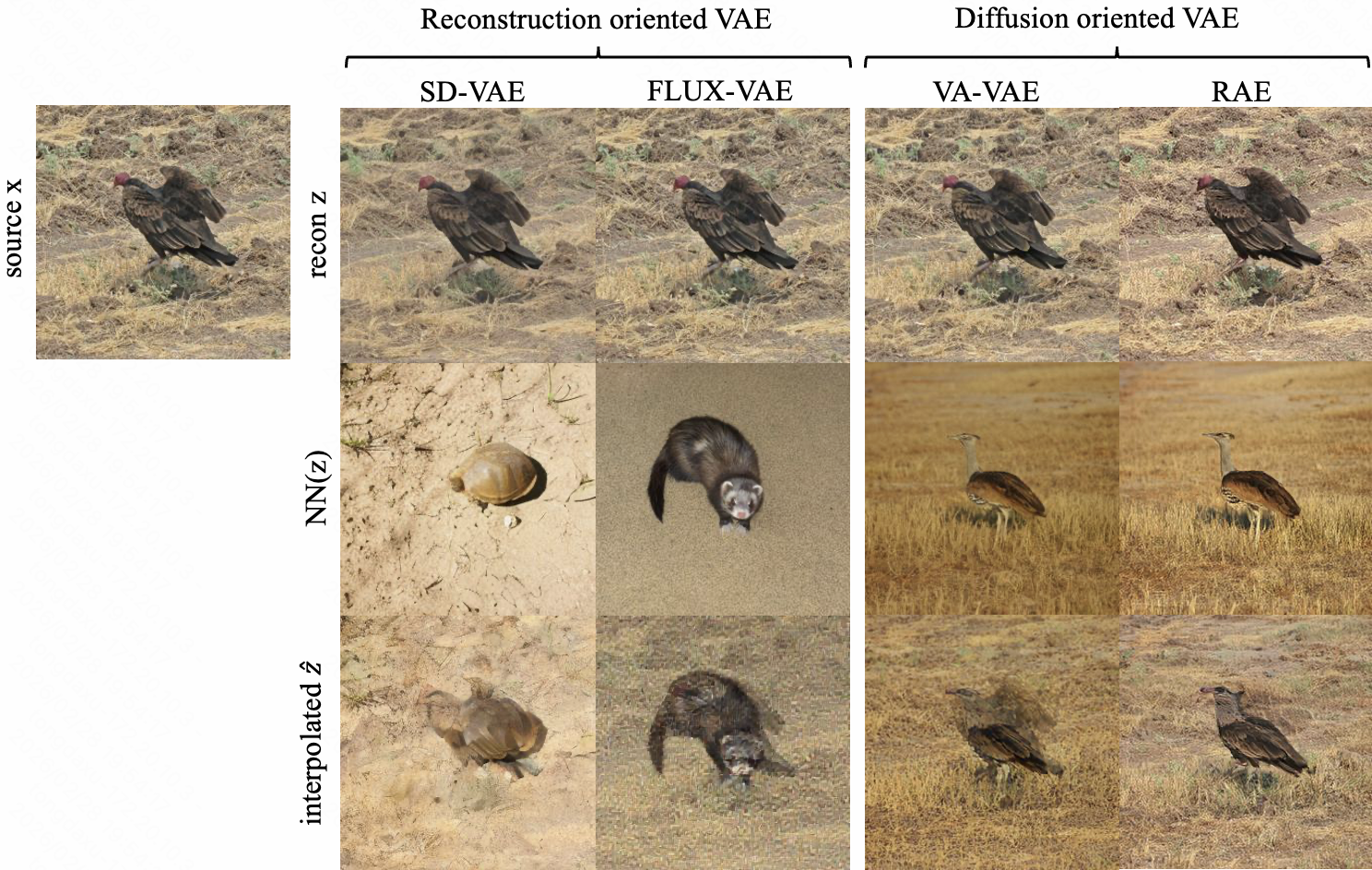}
\caption{Visualization of nearest-neighbor latents $\mathrm{NN}(z)$ and interpolated latents $\hat{z}$. For reconstruction-oriented VAEs, $\mathrm{NN}(z)$ is often semantically different from $z$, causing the decoded interpolated latents to produce invalid images. In contrast, for diffusion-oriented VAEs, $\mathrm{NN}(z)$ is semantically similar to $z$, and the decoded interpolated latents produce realistic images.}
\label{fig:viz}
\end{figure}

\subsection{Main Results}

\textbf{Reconstruction metrics correlate negatively with diffusion sample quality.}
As shown in Table~\ref{tab:resultfid}, image-to-image distortion metrics, including PSNR, SSIM, and LPIPS, are strongly negatively correlated with gFID. This observation supports the \say{reconstruction--generation dilemma}: better reconstruction quality does not necessarily translate into better diffusion generation quality.

\textbf{iFID exhibits a significantly stronger correlation with diffusion sample quality.}
Among non-reconstruction metrics, diffusion loss, VF Loss \citep{Yao2025ReconstructionVG}, and GMM Loss \citep{chen2025aligning} show only moderate or weak correlations with gFID. In contrast, iFID exhibits a substantially stronger correlation with gFID, achieving PCC and SRCC values close to $0.9$. Notably, iFID correlates much more strongly with gFID than diffusion loss, suggesting that diffusion loss alone is not a reliable predictor of sample quality when comparing different latent spaces.

\textbf{Visualization of nearest-neighbor and interpolated latents.}
Figure~\ref{fig:viz} visualizes the decoded latent $z$, its nearest-neighbor latent $\mathrm{NN}(z)$, and the interpolated latent $\hat{z}$ for different VAEs. For reconstruction-optimized VAEs, such as SD-VAE and FLUX-VAE \citep{rombach2022high,flux2024}, $\mathrm{NN}(z)$ is often semantically unrelated to $z$, causing the decoded interpolated latent $\hat{z}$ to produce invalid images. In contrast, for diffusion-optimized VAEs, such as VA-VAE and RAE \citep{Yao2025ReconstructionVG,zheng2025diffusion}, $\mathrm{NN}(z)$ is semantically similar to $z$, and the decoded interpolated latent $\hat{z}$ produces realistic images. Similar observations have also been reported in \citep{peng2022beit,Song2025SelectiveUI}.

\begin{table}[thb]
\caption{Sensitivity analysis of the latent interpolation method, the number of images used for nearest-neighbor search, and the number of nearest neighbors $K$. iFID is robust to these choices and consistently shows a strong correlation with gFID.}
\label{tab:abl}
\centering
\resizebox{0.9\linewidth}{!}{
\begin{tabular}{@{}lccccccc@{}}
\toprule
\multirow{3}{*}{Setting} & \multirow{3}{*}{Interpolation} & \multirow{3}{*}{\# Images} & \multirow{3}{*}{$K$ for $\mathrm{NN}(\cdot)$} &  \multicolumn{4}{c}{gFID SiT-XL} \\ \cmidrule(lr){5-8} 
 & & &  &  \multicolumn{2}{c}{w/o CFG} & \multicolumn{2}{c}{w/ CFG} \\ \cmidrule(lr){5-6} \cmidrule(lr){7-8} 
 &  & & & \multicolumn{1}{c}{PCC$\uparrow$} & \multicolumn{1}{c}{SRCC$\uparrow$} & \multicolumn{1}{c}{PCC$\uparrow$} & \multicolumn{1}{c}{SRCC$\uparrow$} \\ \midrule
(a) & Linear & 50k & 1 & 0.78 & 0.78 & 0.80 & 0.82 \\
(b) & Mask & 50k & 1 & 0.74 & 0.77 & 0.75 & 0.76 \\
(c) & Spherical & 50k & 1 & 0.84 & 0.85 & 0.86 & 0.81 \\
(d) & Spherical & 200k & 1 & 0.88 & 0.89 & 0.86 & 0.87 \\
(e) & Spherical & 1000k & 1 & 0.89 & 0.89 & 0.86 & 0.87 \\
(f) (default) & Spherical & 1000k & 10 & 0.89 & 0.91 & 0.88 & 0.92 \\
\bottomrule
\end{tabular}
}
\end{table}
\begin{table}[thb]
\caption{Sensitivity analysis of the iFID interpolation strength $\alpha$. As $\alpha$ increases from $0$ to $0.5$, iFID becomes more correlated with gFID and less correlated with rFID. The correlation with gFID becomes reasonably strong when $\alpha\ge 0.2$.}
\label{tab:is}
\centering
\resizebox{0.75\linewidth}{!}{
\begin{tabular}{@{}lcccccc@{}}
\toprule
\multirow{3}{*}{Interpolation strength $\alpha$} & \multicolumn{2}{c}{\multirow{2}{*}{rFID}} & \multicolumn{4}{c}{gFID SiT-XL} \\\cmidrule(lr){4-7} 
 & \multicolumn{2}{c}{} & \multicolumn{2}{c}{w/o CFG} & \multicolumn{2}{c}{w/ CFG} \\ \cmidrule(lr){2-3}\cmidrule(lr){4-5}\cmidrule(lr){6-7}   
 & PCC$\uparrow$ & SRCC$\uparrow$ & PCC$\uparrow$ & SRCC$\uparrow$ & PCC$\uparrow$ & SRCC$\uparrow$ \\ \midrule
$\alpha=0.0$ (rFID) & 1.00 & 1.00 & -0.06 & -0.21 & -0.15 & -0.31 \\
$\alpha=0.1$ & 0.21 & 0.67 & 0.37 & 0.22 & 0.38 & 0.36 \\
$\alpha=0.2$ & -0.14 & 0.14 & 0.63 & 0.53 & 0.62 & 0.52 \\
$\alpha=0.3$ & -0.14 & 0.11 & 0.66 & 0.59 & 0.65 & 0.58 \\
$\alpha=0.4$ & 0.00 & 0.19 & 0.78 & 0.79 & 0.76 & 0.79 \\
$\alpha=0.5$ (iFID default) & 0.06 & 0.10 & 0.89 & 0.91 & 0.88 & 0.92 \\ \bottomrule
\end{tabular}
}
\end{table}
\subsection{Sensitivity Analysis}
We examine several design choices in iFID to evaluate its robustness to different parameter settings.

\textbf{Interpolation method and strength.}
The choice of interpolation between $z^{(i)}$ and its nearest neighbor $z^{(i^*)}$ is important. The simplest choice is linear interpolation. For Gaussian variational autoencoders, spherical interpolation is often considered preferable \citep{Jang2024SphericalLI}. In addition, prior work on diffusion generalization \citep{Kamb2024AnAT} suggests that random mask interpolation is also a reasonable alternative. As shown in Table~\ref{tab:abl}, spherical interpolation achieves the highest correlation with gFID. Nevertheless, linear and mask interpolation also achieve strong correlations, with values around $0.8$.

By default, we set the interpolation strength of iFID to $\alpha=0.5$, which corresponds to the midpoint interpolation and, according to the ridge-set analysis, the location where the nontrivial eigenvalue along the ridge line is maximized. When $\alpha=0$, the interpolated latent is identical to the source latent $z$, and iFID reduces to rFID. Conversely, when $\alpha=1$, the interpolated latent is identical to $\mathrm{NN}(z)$. As shown in Table~\ref{tab:is}, as $\alpha$ increases from $0$ to $0.5$, the correlation between iFID and rFID decreases, while its correlation with gFID increases rapidly. When $\alpha \ge 0.2$, the correlation between iFID and gFID becomes reasonably high ($\ge 0.6$).

\textbf{Reference dataset size and Top-$K$ nearest neighbors..}
The number of images used for nearest-neighbor search may influence the results. In practice, we use the 50k ImageNet validation set as the source set for $z^{(i)}$ and the 1000k ImageNet training set as the reference set for computing $\mathrm{NN}(\cdot)$. However, Table~\ref{tab:abl} shows that reducing the number of images used for nearest-neighbor search does not significantly affect the final results. In fact, using a 50k subset of the ImageNet training set already yields a satisfactory correlation, approximately $0.85$.

We also examine whether using the top-$K$ nearest neighbors affects the results. As shown in Table~\ref{tab:abl}, replacing the single nearest neighbor with the top-$K$ nearest neighbors, with $K=10$, has minimal impact on the correlation: PCC remains $0.89$ without CFG and increases slightly from $0.86$ to $0.88$ with CFG. For this variant, we construct the interpolated latent by randomly selecting one latent from the top-$K$ nearest neighbors.

\section{Related Work}
\textbf{Variational autoencoders for diffusion models.}
Early latent diffusion models typically train VAEs solely for reconstruction \citep{rombach2022high,esser2024scaling}. Later studies show that standard reconstruction-based optimization does not necessarily improve diffusion sampling, a phenomenon known as the \say{reconstruction--generation dilemma}. Several works propose training VAEs with a diffusion prior to improve diffusion models \citep{wehenkel2021diffusion,vahdat2021score,Heek2026UnifiedL}. Other works explain and regularize VAEs from a signal-processing perspective \citep{Kouzelis2025EQVAEER,Skorokhodov2025ImprovingTD,liu2025delving,fan2025prism}. Additionally, many recent methods connect VAEs to contrastive-learning-based image encoders, such as DINO and MAE \citep{oquab2023dinov2,he2022masked}, to improve their suitability for diffusion generation \citep{chen2025softvq,Chen2025MaskedAA,zheng2025diffusion,Yao2025ReconstructionVG,ye2025distribution,yang2025latent,yao2025towards,Shi2025LatentDM,Gao2025OneLI}. Despite these advances, effective metrics for measuring VAE diffusability, as well as explanations for why the \say{reconstruction--generation dilemma} occurs, remain underexplored.

\textbf{How diffusion models generate unseen samples.}
Understanding how diffusion models generate unseen samples is a fundamental question. In general, underfitting, under-parameterization, and inductive biases in the score estimator are believed to induce deviations of the learned score from the empirical score \citep{Scarvelis2023ClosedFormDM,Kadkhodaie2023GeneralizationID,Kamb2024AnAT,Bonnaire2025WhyTraining,bonnaire2025diffusion,Song2025SelectiveUI}. These deviations prevent diffusion models from simply replicating the training data. Several works argue that diffusion models generate novel samples by composing and interpolating images from the training dataset \citep{Kamb2024AnAT,niedoba2024towards,Okawa2023CompositionalAE,Deschenaux2024GoingBC}. Similarly, \citep{Zhang2025GeneralizationOD} demonstrate that diffusion models generalize through linear interpolation in the latent space of the score-matching network. Conversely, other works show that diffusion models can generate invalid samples, or hallucinate, by interpolating between nearby modes of the training data \citep{Aithal2024UnderstandingHI,ChandranC2025LaplacianSS,baptista2025memorization}.

\subsection{Discussion and Conclusion}

Although iFID is strongly correlated with the gFID of diffusion models, there is no straightforward way to directly minimize it. One possible direction for optimizing iFID is to incorporate manifold sharpness \citep{Jeon2024UnderstandingAM}, which we leave for future work.

In conclusion, we propose interpolated FID (iFID), a simple variant of rFID that is strongly correlated with diffusion generation FID. Our key findings are as follows. First, iFID provides a useful proxy for gFID because diffusion models often generate unseen samples by interpolating training images, and iFID evaluates the quality of decoded latent interpolations. Second, reconstruction metrics can be negatively correlated with generation quality because reconstruction favors separated or disconnected latent spaces, whereas diffusion generation benefits from interpolatable latent spaces. Theoretically, iFID evaluates decoded interpolations that are aligned with an approximate ridge-set manifold around which diffusion samples concentrate. To our knowledge, iFID is the first metric shown to strongly correlate with diffusion gFID across diverse VAEs.

\newpage
\bibliography{iclr2026_conference}
\bibliographystyle{plain}

\newpage
\appendix
\section{Proof of Main Results}
\label{app:pf}
\textbf{Theorem 3. } (The Jacobian is close to the two-point Jacobian when $t$ is small) \textit{
When $t$ is small, the full Jacobian is close to the Jacobian determined by
$z^{(i^*)}$ and $z^{(j^*)}$, the two nearest-neighbor training samples to $z_t$.
Assume that $B=\max_k\|z^{(k)}\| < \infty$ and $\Delta = \min_{k\neq i^*,j^*} ||z_t-\alpha_tz^{(k)}||^2 - ||z_t-\alpha_tz^{(j^*)}||^2 > 0$. Then,
\begin{align}
    \|J-J_2\| &\le \frac{\alpha_t^2}{\sigma_t^4}
    \left(9B^2(M-2)\right)\exp{\left(-\frac{\Delta}{2\sigma_t^2}\right)},\notag \\ 
    \textup{where } J_2 &:= -\frac{1}{\sigma_t^2}I
    + \frac{\alpha_t^2}{\sigma_t^4}\sum_{k=i^*,j^*}\tilde{w}^{(k)}
    (z^{(k)}-\tilde{\mu})(z^{(k)}-\tilde{\mu})^T.
\end{align}
Here, $J_2$ is the Jacobian of the optimal score for the two-point training set $\{z^{(i^*)},z^{(j^*)}\}$, where 
\begin{gather}
    \tilde{w}^{(k)} = \frac{w^{(k)}}{\sum_{l=i^*,j^*}w^{(l)}},\tilde{\mu}=\sum_{k=i^*,j^*}\tilde{w}^{(k)}z^{(k)}. \notag
\end{gather}
}
\begin{proof}
    Denote 
    \begin{align}
        d_k&=||z_t-\alpha_tz^{(k)}||^2,  \notag \\
        w^{(k)} &= \frac{\exp{(-d_k/2\sigma_t^2)}}{\sum_{l}\exp{(-d_l/2\sigma_t^2)}},\notag \\
        \tilde{w}^{(i^*)} &= \frac{\exp{(-d_{i^*}/2\sigma_t^2)}}{\exp{(-d_{i^*}/2\sigma_t^2)}+\exp{(-d_{j^*}/2\sigma_t^2)}}\notag \\
        &= \frac{w^{(i^*)}}{w^{(i^*)}+w^{(j^*)}},\notag\\
        \tilde{w}^{(j^*)} &= 1 - \tilde{w}^{(i^*)}\notag\\
        &= \frac{w^{(j^*)}}{w^{(i^*)}+w^{(j^*)}},\notag\\
        \mu &= \sum_{k=1}^M w^{(k)}z^{(k)}, \notag\\
        \tilde{\mu} &= \tilde{w}^{(i)}z^{(i)} + \tilde{w}^{(j)}z^{(j)}.
    \end{align}
    Without loss of generality, we assume
    \begin{align}
        d_{j^*} \ge d_{i^*}.
    \end{align}
    And we denote the distance of third closest $z^{(k)}$ as 
    \begin{align}
    \Delta &= \min_{k\neq i,j}(d_k - d_{j^*}) > 0,\notag\\
    \delta &= z^{(i^*)} - z^{(j^*)}.
    \end{align}    
    Denote the full covariance, a subpart of covariance and the covariance of two point as:
    \begin{align}
        \mathcal{C} &= \sum_{k}w^{(k)}(z^{(k)} - \mu)(z^{(k)} - \mu)^T, \notag \\
        \mathcal{C}_{*} &= w^{(i^*)}(z^{(i^*)} - \mu)(z^{(i^*)}-\mu)^T+w^{(j^*)}(z^{(j^*)}-\mu)(z^{(j^*)} - \mu)^T,\notag \\
        \mathcal{C}_2 &=\tilde{w}^{(i^*)}(z^{(i^*)} - \tilde{\mu})(z^{(i^*)}-\tilde{\mu})^T+\tilde{w}^{(j^*)}(z^{(j^*)}-\tilde{\mu})(z^{(j^*)} - \tilde{\mu})^T \notag \\&= \tilde{w}^{(i)}\tilde{w}^{(j)}\delta\delta^T.
    \end{align}
    Then the jacobian error
    \begin{align}        
        J - J_2 &= \frac{\alpha_t^2}{\sigma_t^4}(\mathcal{C} - \mathcal{C}_2) \notag \\
        &= \frac{\alpha_t^2}{\sigma_t^4}(\underbrace{\mathcal{C}_{*} - \mathcal{C}_2}_{\mathcal{E}_1\textup{ , pair error}} + \underbrace{\sum_{k\neq i,j}w^{(k)}(z^{(k)}-\mu)(z^{(k)}-\mu)^T}_{\mathcal{E}_2\textup{, tail error}}).
    \end{align}
    We first consider the tail error $\mathcal{E}_2$.
    \begin{itemize}
        \item  We bound the normalization constant of softmax terms: 
    \begin{gather}
        \sum_l^M w^{(l)} = \exp{(-d_{i^*}/2\sigma_t^2)}(1+\exp{(-(d_{j^*}-d_{i^*})/2\sigma_t^2)}+\sum_{k\neq i,j^M} \exp{(-(d_k-d_{i^*})/2\sigma_t^2)}. \label{eq:wsum}
    \end{gather}
    By definition, $\forall k \neq i^*,j^*$,
    \begin{gather}
        d_k - d_{i^*} \ge (d_{j^*}-d_{i^*}) + \Delta. \label{eq:dki}
    \end{gather}
    Therefore
    \begin{align}
        \exp{(-(d_k-d_{i^*})/2\sigma_t^2)} &\overset{\textup{Eq.}\ref{eq:dki}}{\le} \exp{(-(d_{j^*}-d_{i^*})/2\sigma_t^2)}\exp{(-\Delta/2\sigma_t^2)},\notag\\
        \sum_{k\neq i^*,j^*} \exp{(-(d_k-d_{i^*})/2\sigma_t^2)} &\le (M-2) \exp{(-(d_{j^*}-d_{i^*})/2\sigma_t^2)}\exp{(-\Delta/2\sigma_t^2)}.
    \end{align}
    And therefore $\forall k \neq i^*,j^*$, we have
    \begin{align} 
        w^{(k)} &= \frac{\exp{(-d_k/2\sigma_t^2)}}{\sum_l^M w^{(l)}} \notag \\&\overset{\textup{Eq.}\ref{eq:wsum}}{\le} \frac{\exp{(-(d_k-d_{i^*})/2\sigma_t^2)}}{1} \notag \\
        &\overset{\textup{Eq.}\ref{eq:dki}}{\le} \exp{(-(d_{j^*}-d_{i^*})/2\sigma_t^2)}\exp{(-\Delta/2\sigma_t^2)} \notag \\
        &\le \exp{(-\Delta/2\sigma_t^2)},\\
        \sum_{k\neq i^*,j^*} w^{(k)} &\le (M-2)\exp{(-\Delta/2\sigma_t^2)}.\label{eq:wnij}
    \end{align}
    \item Next, notice that
    \begin{gather}
        ||z^{(k)} - \mu|| \le ||z^{(k)}|| + ||\mu|| \le 2B.
    \end{gather}
    Combine with Eq~\ref{eq:wnij}, we have
    \begin{gather}
        \mathcal{E}_2\le4B^2(M-2)\exp{(-\Delta/2\sigma_t^2)}
    \end{gather}
    \end{itemize}
Next, we consider the pair error $\mathcal{E}_1$. 
\begin{itemize}
    \item First, we rewrite $\mathcal{C_*}$. We define mean shift $r$, which is the difference between the posterior mean of $M$ samples and $2$ samples as
    \begin{align}
        r=\mu - \tilde{\mu}.
    \end{align}
    Also, we define the distance to mean as
    \begin{align}
        a^{(i^*)} &= z^{(i^*)} - \tilde{\mu}, \notag \\
        a^{(j^*)} &= z^{(j^*)} - \tilde{\mu}.
    \end{align}
    Then
    \begin{align}
        z^{(i^*)} - \mu &= (z^{(i^*)} - \tilde{\mu}) - (\mu - \tilde{\mu})\notag \\
        & = a^{(i^*)} - r, \notag \\
    z^{(j^*)} - \mu &= a^{(j^*)} - r.
    \end{align}
    Therefore,
    \begin{align}
        \mathcal{C}_{*} =& w^{(i^*)}(a^{(i^*)} - r)(a^{(i^*)} - r)^T+w^{(j^*)}(a^{(j^*)} - r)(a^{(j^*)} - r)^T \notag \\
    =&w^{(i^*)}a^{(i^*)}a^{(i^*)T}+w^{(j^*)}a^{(j^*)}a^{(j^*)T}\notag \\&-(w^{(i^*)}a^{(i^*)}+w^{(j^*)}a^{(j^*)})r^T-r(w^{(i^*)}a^{(i^*)}+w^{(j^*)}a^{(j^*)})^T\notag \\&+(w^{(i^*)}+w^{(j^*)})rr^T.
    \end{align}
    Because
    \begin{align}
        w^{(i^*)}a^{(i^*)}+w^{(j^*)}a^{(j^*)} &= w^{(i^*)}(z^{(i^*)}-\tilde{\mu})+w^{(j^*)}(z^{(j^*)}-\tilde{\mu})\notag \\
        &= (w^{(i^*)}z^{(i^*)}+w^{(j^*)}z^{(j^*)}) - (w^{(i^*)}+w^{(j^*)})\tilde{\mu} \notag \\
        &=(w^{(i^*)}z^{(i^*)}+w^{(j^*)}z^{(j^*)}) - (w^{(i^*)}+w^{(j^*)})\frac{(w^{(i^*)}z^{(i^*)}+w^{(j^*)}z^{(j^*)})}{w^{(i^*)}+w^{(j^*)}} \notag \\
        &=0.
    \end{align}
    Therefore,
    \begin{align}
        \mathcal{C}_{*} = w^{(i^*)}a^{(i^*)}a^{(i^*)T}+w^{(j^*)}a^{(j^*)}a^{(j^*)T}+(w^{(i^*)}+w^{(j^*)})rr^T.
    \end{align}
    Next, we note that
    \begin{align}
        a^{(i^*)} =& z^{(i^*)} - \tilde{\mu} \notag \\
         =&\tilde{w}^{(j^*)}\delta,\notag \\
    a^{(j^*)} =& -\tilde{w}^{(i^*)}\delta.
    \end{align}
    Therefore,
    \begin{align}
        w^{(i^*)}a^{(i^*)}a^{(i^*)T} =& (w^{(i^*)}+w^{(j^*)})\tilde{w}^{(i^*)}\tilde{w}^{(j^*)2}\delta\delta^T,\notag \\
        w^{(j^*)}a^{(j^*)}a^{(j^*)T}=&(w^{(i^*)}+w^{(j^*)})\tilde{w}^{(j^*)}\tilde{w}^{(i^*)2}\delta\delta^T,\notag \\
        w^{(i^*)}a^{(i^*)}a^{(i^*)T}+w^{(j^*)}a^{(j^*)}a^{(j^*)T}=&(w^{(i^*)}+w^{(j^*)})(\tilde{w}^{(i^*)}+\tilde{w}^{(j^*)})\tilde{w}^{(i^*)}\tilde{w}^{(j^*)}\delta\delta^T\notag \\
        =& (w^{(i^*)}+w^{(j^*)})\tilde{w}^{(i^*)}\tilde{w}^{(j^*)}\delta\delta^T.
    \end{align}
    Thus, we have
    \begin{align}
        \mathcal{C}_*=(w^{(i^*)}+w^{(j^*)})(\tilde{w}^{(i^*)}\tilde{w}^{(j^*)}\delta\delta^T+rr^T)\notag \\
        = (w^{(i^*)}+w^{(j^*)})(C_2+rr^T).
    \end{align}
    \item Next, we consider the difference $\mathcal{C_*} - \mathcal{C_2}$. We have
    \begin{align}
        \mathcal{C_*} - \mathcal{C_2} = \sum_{k\neq i^*,j^*}w^{(k)}C_2+(w^{(i^*)}+w^{(j^*)})rr^T.
    \end{align}
    The first term, we have
    \begin{align}
    ||\sum_{k\neq i^*,j^*}w^{(k)}\mathcal{C}_2||\le&||\sum_{k\neq i^*,j^*}w^{(k)}||||w^{(i^*)}w^{(j^*)}\delta\delta^T||\notag \\
        \le&||\sum_{k\neq i^*,j^*}w^{(k)}||||\frac{1}{4}\delta\delta^T||\notag \\
        \le&||\sum_{k\neq i^*,j^*}w^{(k)}||\frac{1}{4}4B^2\notag \\
        =& ||\sum_{k\neq i^*,j^*}w^{(k)}||B^2,\notag \\
        \le&B^2(M-2)\exp{(-\Delta/2\sigma_t^2)}.
    \end{align}
    For the second term, we rewrite mean shift term
    \begin{align}
        r=&\mu - \tilde{\mu}\notag \\
        =&w^{(i^*)}z^{(i^*)}+w^{(j^*)}z^{(i^*)}+\sum_{k\neq i^*,j^*}w^{(k)}z^{(k)} - (\tilde{w}^{(i^*)}z^{(i^*)}+\tilde{w}^{(j^*)}z^{(i^*)})\notag \\
        =&(w^{(i^*)}+w^{(j^*)})\tilde{\mu}+\sum_{k\neq i^*,j^*}w^{(k)}z^{(k)} - (\tilde{w}^{(i^*)}z^{(i^*)}+\tilde{w}^{(j^*)}z^{(i^*)})\notag \\
        =&(w^{(i^*)}+w^{(j^*)}-1)\tilde{\mu}+\sum_{k\neq i^*,j^*}w^{(k)}z^{(k)} \notag \\
        =&-\sum_{k\neq i^*,j^*}w^{(k)}\tilde{\mu}+\sum_{k\neq i^*,j^*}w^{(k)}z^{(k)} \notag  \notag \\
        =& \sum_{k\neq i^*,j^*}w^{(k)}(z^{(k)}-\tilde{\mu}).
    \end{align}
    Therefore, the second term can be bounded as
    \begin{align}
        ||(w^{(i^*)}+w^{(j^*)})rr^T||\le& ||(w^{(i^*)}+w^{(j^*)})||||r||^2\notag \\
        \le & ||r||^2\notag \\
        \le &||\sum_{k\neq i^*,j^*}w^{(k)}(z^{(k)}-\tilde{\mu})||^2\notag \\
        \le & ||\sum_{k\neq i^*,j^*}w^{(k)}||^2||(z^{(k)}-\tilde{\mu})||^2 \notag \\
        \le & ||\sum_{k\neq i^*,j^*}w^{(k)}||||(z^{(k)}-\tilde{\mu})||^2\notag\\
        \le & 4B^2||\sum_{k\neq i^*,j^*}w^{(k)}||\notag \\
        \le & 4B^2(M-2)\exp{(-\Delta/2\sigma_t^2)}.
    \end{align}
    \item Combining those two terms, we have
    \begin{align}
        ||\mathcal{E}_1|| =& ||\mathcal{C_*} - \mathcal{C}_2||\notag \\
        \le& B^2(M-2)\exp{(-\Delta/2\sigma_t^2)}+4B^2(M-2)\exp{(-\Delta/2\sigma_t^2)} \notag \\
        =& 5B^2(M-2)\exp{(-\Delta/2\sigma_t^2)}.
    \end{align}
\end{itemize}
Combining the bounds on $\mathcal{E}_1,\mathcal{E}_2$, we have
\begin{align}
    ||J-J_2|| \le \frac{\alpha_t^2}{\sigma_t^4}9B^2(M-2)\exp{(-\frac{\Delta}{2\sigma_t^2})}.
\end{align}
Clearly, when $t$ is small, $\alpha_t \rightarrow 1, \sigma_t\rightarrow 0$, and
\begin{align}
    \lim_{t\rightarrow0}||J-J_2||= 0.
\end{align}
Because 
\begin{align}
    \lim_{\sigma_t\rightarrow0} \frac{1}{\sigma_t^4}\exp{(-\frac{\Delta}{2\sigma_t^2})} = 0.
\end{align}
\end{proof}
\textbf{Theorem 4.} (Diffusion samples concentrate around the ridge set determined by the Jacobian {\citep{He2026DiffusionMG}}) \textit{
Diffusion samples $z_t$ concentrate around the ridge set
\begin{gather}
    \mathcal{R}\left(\beta=\frac{1}{\sigma_t^2},t\right)
    =
    \left\{z_t \mid z_t = (1-\gamma)z^{(i^*)} + \gamma z^{(j^*)},\ \gamma \in \mathbb{R}\right\}.
\end{gather}
Moreover, the largest eigenvalue is attained at the midpoint, when $\gamma=0.5$ and $z_t=(z^{(i^*)}+z^{(j^*)})/2$.
}
\begin{proof}
    Notice that
    \begin{align}
        J_2 &= -\frac{1}{\sigma_t^2}+\frac{\alpha_t^2}{\sigma_t^4}\sum_{k=i^*,j^*}\tilde{w}^{(k)}(z^{(k)}-\tilde{\mu})(z^{(k)}-\tilde{\mu})^T\notag \\
        &= -\frac{1}{\sigma_t^2}+\frac{\alpha_t^2}{\sigma_t^4}\tilde{w}^{(i^*)}\tilde{w}^{(j^*)}\delta\delta^T.
    \end{align}
    The top eigenvalue and eigenvector is the direction alone the connection line of $z^{(i^*)},z^{(j^*)}$:
    \begin{gather}
        \lambda_1= - \frac{1}{\sigma_t^2}+\frac{\alpha_t^2}{\sigma_t^4}\tilde{w}^{(i^*)}\tilde{w}^{(j^*)}||\delta_{ij}||, 
        u_1= \frac{\delta_{ij}}{||\delta_{ij}||}.
    \end{gather}
    And the other eigenvalues are
    \begin{gather}
        \lambda_{2,...,d} = -\frac{1}{\sigma_t^2}.
    \end{gather}
    When $\beta=\frac{1}{\sigma_t^2}$, all other eigenvalues are discarded and only one dimension is left for ridge, as $\tilde{w}^{(i^*)}\tilde{w}^{(j^*)} \ge 0$. Therefore, for 1-dimensional ridge, we must have
    \begin{gather}
        s_*(z_t,t) \in \textup{span}(\delta).
    \end{gather}
    On the other hand, the empirical score in this case is
    \begin{gather}
        s_*(z_t,t) = \frac{1}{\sigma_t^2}(-z_t+\alpha_t\tilde{\mu}).
    \end{gather}
    Notice that 
    \begin{gather}
        \tilde{\mu} = \tilde{w}^{(i^*)}z^{(i^*)}+(1-\tilde{w}^{(j^*)}))z^{(j^*)},
    \end{gather}
    which means $\alpha_t\tilde{\mu}$ pass through $\alpha_tz^{(i^*)}$ and $\alpha_tz^{(j^*)}$.
    Besides, we need to have
    \begin{gather}
        -z_t+\alpha_t\tilde{\mu} \in \textup{span}(\delta),\notag \\
    \end{gather}
    However, we already know $\tilde{\mu}$ pass through $\alpha_tz^{(i^*)}$ and $\alpha_tz^{(j^*)}$. When $t$ is small, $\alpha_t=1$. This means that the ridge set is just the line passing through $z^{(i^*)},z^{(j^*)}$.
    On the other hand, when $\gamma=0.5$, clearly $\lambda_1$ is maximized with $\tilde{w}^{(i^*)} = \tilde{w}^{(j^*)} = 0.5$.
\end{proof}
\section{Additional Experimental Results}
\subsection{Pre-Trained VAEs}

\textbf{SD-VAE} The Stable Diffusion VAE (SD-VAE) \cite{rombach2022high}, is among the first VAEs to be adopted in large-scale latent diffusion models. It is trained with a mixture of KL-divergence penalties, mean-squared error, LPIPS, and adversarial loss. It has a 2D latent with shape $4\times 32\times 32$. We adopt the pre-trained checkpoint from \url{https://huggingface.co/stable-diffusion-v1-5/stable-diffusion-v1-5/tree/main/vae}

\textbf{IN-VAE} ImageNet VAE (IN-VAE) \cite{Yao2025ReconstructionVG} serves as the baseline VAE for REPA-E \cite{Leng2025REPAEUV}. It is trained with a mixture of KL divergence penalties, mean-squared error, LPIPS, and adversarial loss. It uses a 2D latent representation with shape $32\times 16\times 16$. We adopt the pre-trained checkpoint from \url{https://huggingface.co/REPA-E/invae/tree/main}.

\textbf{FLUX-VAE} FLUX-VAE \cite{flux2024} is the VAE for FLUX-1 models. It extends the channel dimension of SD-VAE to 32 and achieves higher reconstruction performance. It has a 2D latent representation with shape $16\times 32\times 32$. We adopt the pre-trained checkpoint from \url{https://huggingface.co/black-forest-labs/FLUX.1-dev/tree/main/vae}.

\textbf{QW-VAE} Qwen-Image VAE (QW-VAE) \cite{wu2025qwen} is the VAE for the Qwen-Image Model. It uses the same latent shape as FLUX-VAE, namely a 2D latent with shape $16\times 32\times 32$. We adopt the pre-trained checkpoint from \url{https://huggingface.co/Qwen/Qwen-Image/tree/main/vae}.

\textbf{SD3-VAE} Stable Diffusion 3 VAE (SD3-VAE) \cite{esser2024scaling} is the VAE used in Stable Diffusion 3 models. It has the same latent shape as FLUX-VAE. It uses a 2D latent with shape $16\times 32\times 32$. We adopt the pre-trained checkpoint at \url{https://huggingface.co/black-forest-labs/FLUX.1-dev/tree/main/vae}.

\textbf{EQ-VAE} Equivariance VAE (EQ-VAE) \cite{Kouzelis2025EQVAEER} is proposed as an improvement for diffusion training. In addition to the SD-VAE loss, EQ-VAE is regularized with an equivariance constraint that ensures that the affine transform applied to the latent space has the same effect as the affine transform applied to the image. It has a 2D latent with shape $4\times 32\times 32$. We adopt the pre-trained checkpoint at \url{https://huggingface.co/zelaki/eq-vae/tree/main}.

\textbf{VA-VAE} Vision foundation model Aligned Variational AutoEncoder (VA-VAE) \cite{Yao2025ReconstructionVG} is proposed to improve Gaussian VAE for diffusion training. In addition to the standard VAE loss, VA-VAE additionally adopts vision foundation models such as DINO \cite{oquab2023dinov2} and aligns the latent space linearly to the feature encoded by DINOv2. Such alignment empirically improves diffusion sample quality. We evaluate two variants of VA-VAE, with 2D latents of shape $32\times 16\times 16$ and $64\times 16\times 16$, respectively. We adopt the pre-trained checkpoint in \url{https://huggingface.co/hustvl/va-vae-imagenet256-experimental-variants/tree/main}.

\textbf{SOFT-VQ} Soft VQ-VAE (SOFT-VQ) \cite{chen2025softvq} is a variational autoencoder with a 1D latent space and a transformer architecture. It also adopts vision foundation models, such as DINOv2, for alignment. It has a 1D latent with shape $64\times 32$. We adopt the pre-trained checkpoint from \url{https://huggingface.co/SoftVQVAE/softvq-l-64}.

\textbf{MAE-TOK} Masked Autoencoders (MAE-TOK) \cite{Chen2025MaskedAA} are an improvement over SOFT-VQ, which additionally introduces mask training following MAE \cite{he2022masked} and introduces auxiliary decoder loss functions aligned with DINOv2 features. It has a 1D latent with shape $128\times 32$. We adopt the pre-trained checkpoint from \url{https://huggingface.co/MAETok/maetok-b-128}.

\textbf{DE-TOK} Denoising Tokenizer (DE-TOK) is a combination of a denoising autoencoder \cite{Vincent2011ACB} and MAE \cite{he2022masked}. During training, it adds noise to tokens, while using mask training at the same time. It has a 1D latent with shape $128\times 32$. We adopt the pre-trained checkpoint from \url{https://huggingface.co/jjiaweiyang/l-DeTok}.

\textbf{DM-VAE} Distribution Matching VAE (DM-VAE) adopts Distribution Matching Distillation (DMD) \cite{Yin2024ImprovedSynthesis} as the prior for the VAE. We find that alignment to the DINO prior derived from ten ImageNet classes is most beneficial for generation. It has a 1D latent with shape $256\times 32$. We adopt the pre-trained checkpoint in \url{https://huggingface.co/sen-ye/dmvae/tree/main}.

\textbf{REPAE-VAE} End-to-end representation alignment VAE (REPAE-VAE) \cite{Leng2025REPAEUV} is a by-product of optimizing a VAE with a REPA alignment \cite{oquab2023dinov2}. We adopt the REPAE-VAE that starts from SD-VAE. It has a 2D latent representation with shape $4\times 32\times 32$. We adopt the pre-trained checkpoint in \url{https://huggingface.co/REPA-E/e2e-sdvae}.

\textbf{RAE} Representation-aligned autoencoder (RAE) \cite{Yu2024RepresentationAF} removes the reconstruction loss in the encoding stage. It directly adopts DINOv2 \cite{oquab2023dinov2} as the image encoder and trains a decoder for reconstruction. It has a 2D latent representation with shape $768\times 16\times 16$. We adopt the pre-trained checkpoint in \url{https://huggingface.co/nyu-visionx/RAE-collections/tree/main}

\textbf{Training} We train all SiT \cite{Ma2024SiTEF} on the ImageNet training split for 40 epochs, following \cite{Leng2025REPAEUV}. For VAE with 2D latents, we adopt patch sizes of 1 or 2 to consistently produce 256 tokens. We employ 1D/2D sine-cosine positional embeddings for VAEs with 1D/2D latents. We additionally apply a time shift introduced in \cite{esser2024scaling,zheng2025diffusion} to eliminate the effect of different total latent dimensions on a diffusion model.

\textbf{Sampling} We adopt the Euler SDE solver implemented in \cite{Ma2024SiTEF} with 250 steps. For results with CFG, we search the classifier-free guidance scale in the range [1.0, 6.0] with a step size of 0.25 and report the best gFID.

\subsection{Metrics}

\textbf{PSNR, SSIM, LPIPS, rFID} The peak signal-to-noise ratio (PSNR), structural similarity index measure (SSIM), and learned perceptual image patch similarity (LPIPS), together with reconstruction FID (rFID) \cite{Wang2004ImageQA, zhang2018unreasonable, Heusel2017GANsTB}, are among the most commonly used reconstruction metrics for training and evaluating VAEs. PSNR, SSIM, and LPIPS are essentially similar as they are computed between a single source image and its reconstruction. However, they are strongly negatively correlated with the quality of samples produced by diffusion models. rFID measures the divergence between the source image distribution and the target image distribution. Additionally, its correlation with gFID is weak.

\textbf{Diffusion Loss} The diffusion loss can be viewed as a form of the variational lower bound for a diffusion model \cite{song2021maximum}. Essentially, when weighted correctly, the diffusion loss bounds the divergence between the source latent distribution and the sampled latent distribution. At first glance, the diffusion loss should be most correlated with sample quality. However, as we have shown empirically, the correlation between diffusion loss and sample quality is weak. The diffusion loss primarily measures the memorization capability of a diffusion model \cite{Buchanan2025OnTE}. By contrast, what determines sample quality is the generalization ability of a diffusion model. Previous works \cite{Chen2025MaskedAA} successfully predict diffusion loss using the number of modes of the latent space as estimated by a Gaussian mixture model. In practice, however, the diffusion loss is not well correlated with sample quality.

\textbf{EQ Loss \cite{Kouzelis2025EQVAEER}} The Equivariance Loss (EQ Loss) evaluates whether a spatial transformation applied to the latent representation is preserved after decoding. Given an input image $x$ and its latent representation $z$, and letting $g(\cdot)$ denote the decoder, the transformed reconstruction term is
\begin{equation}
\mathcal{L}_{\mathrm{EQ}}(x;\tau)
=
\Delta\!\left(\tau \circ x,\; g(\tau \circ z)\right).
\end{equation}
where $\Delta(\cdot,\cdot)$ denotes the reconstruction distortion. The transformation $\tau$ is sampled from spatial transformations composed of scaling and $90^\circ$-multiple rotations. The final score is averaged over the evaluation set. Lower EQ Loss indicates better equivariance under such spatial transformations.

\textbf{VF Loss \cite{Yao2025ReconstructionVG}} The Vision Foundation model alignment loss (VF loss) aligns the VAE latent features with the features of a frozen vision foundation model. Given an input $x$, let $z$ denote the latent representation produced by the encoder and let $f$ denote the feature representation extracted by the foundation model. A linear projection $W$ is first used to match the latent dimension to the foundation-model feature dimension:
\begin{equation}
z' = W z.
\end{equation}
Let $d$ denote the number of feature positions after flattening all non-channel dimensions. We use $z'_i$ and $f_i$ to denote the feature vectors at position $i \in \{1,\dots,d\}$ in $z'$ and $f$, respectively. The VF loss consists of two terms. The first term is a marginal cosine similarity term:
\begin{equation}
\mathcal{L}_{\mathrm{mcos}}
=
\frac{1}{d}
\sum_{i=1}^{d}
\mathrm{ReLU}\!\left(
1-m_1-\frac{z'_i\cdot f_i}{\|z'_i\|\,\|f_i\|}
\right).
\end{equation}
The second term is a marginal distance matrix similarity term, where $i,j \in \{1,\dots,d\}$ index two feature positions:
\begin{equation}
\mathcal{L}_{\mathrm{mdms}}
=
\frac{1}{d^2}
\sum_{i=1}^{d}\sum_{j=1}^{d}
\mathrm{ReLU}\!\left(
\left|
\frac{z'_i\cdot z'_j}{\|z'_i\|\,\|z'_j\|}
-
\frac{f_i\cdot f_j}{\|f_i\|\,\|f_j\|}
\right|
-m_2
\right).
\end{equation}
Here, $m_1$ and $m_2$ are margin hyperparameters. We define the overall VF loss as
\begin{equation}
\mathcal{L}_{\mathrm{VF}} = \mathcal{L}_{\mathrm{mcos}} + \mathcal{L}_{\mathrm{mdms}}.
\end{equation}

\textbf{SE Loss \cite{Skorokhodov2025ImprovingTD}} The Scale-Equivariance Loss (SE Loss) regularizes reconstruction by enforcing consistency between downsampled RGB signals and reconstructions from downsampled latents. Given an input image $x$ and its latent representation $z$, let $s(\cdot)$ denote a downsampling operator, and define $\tilde{x}=s(x)$ and $\tilde{z}=s(z)$. The SE objective is
\begin{equation}
\mathcal{L}_{\mathrm{SE}}(x)
=
\Delta(x,g(z))
+
\alpha\, \Delta(\tilde{x},g(\tilde{z})),
\end{equation}
where $\Delta(\cdot,\cdot)$ denotes the reconstruction distortion and $\alpha$ is the scale-equivariance regularization weight. Lower SE Loss indicates better scale equivariance.

\begin{figure}[thb]
\centering
    \includegraphics[width=0.5\linewidth]{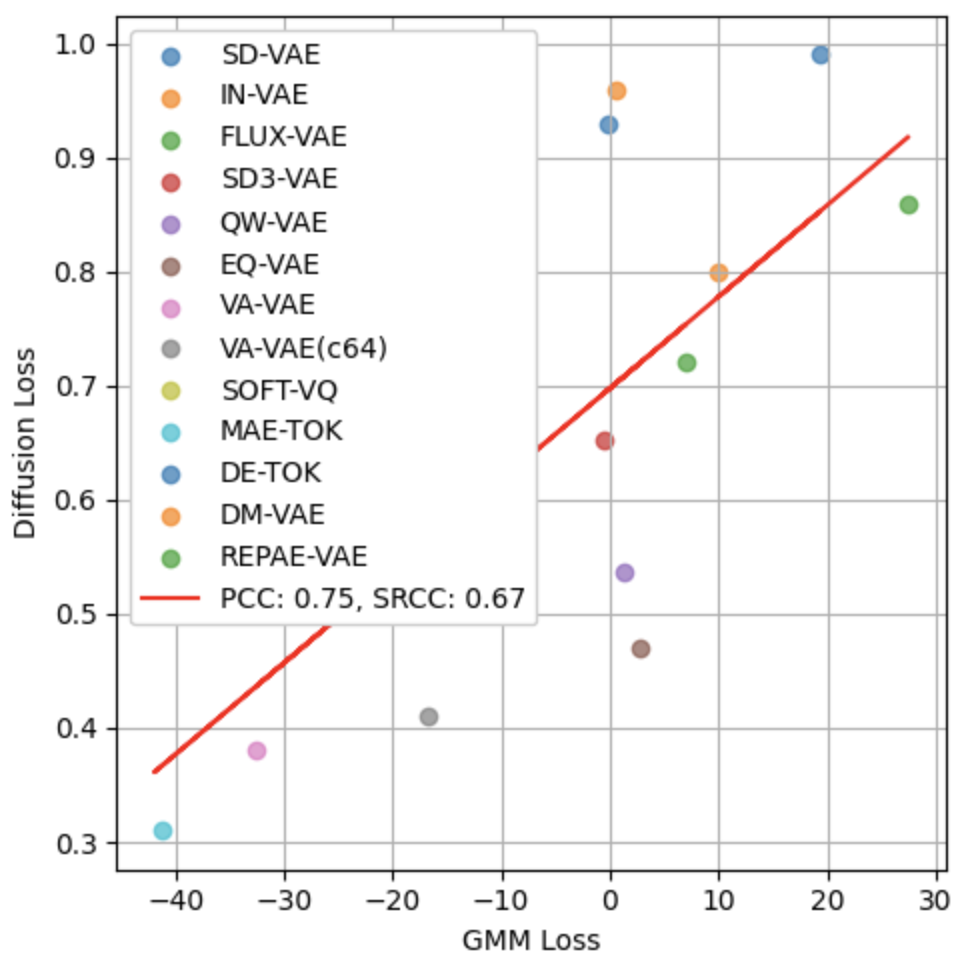}
\caption{The relationship between the GMM loss and the diffusion loss is strong, which supports the theoretical results presented in \cite{Chen2025MaskedAA}.}
\label{fig:gmm}
\end{figure}

\textbf{GMM Loss \cite{Chen2025MaskedAA}} The Gaussian Mixture Model Loss (GMM Loss) quantifies the structural complexity of the latent space by fitting a Gaussian mixture model with a fixed number of components. Given latent representations $z^{(1:N)}$, we fit a $K$-component Gaussian mixture model (GMM) to the latent space and evaluate its negative log-likelihood (NLL):
\begin{equation} \mathcal{L}_{\mathrm{GMM}}
-\frac{1}{N}\sum_{n=1}^{N}\log p_{\mathrm{GMM},K}(z^{(n)}),
\end{equation}
where $p_{\mathrm{GMM},K}$ denotes the density of the fitted $K$-component GMM. A lower GMM Loss indicates that the latent distribution can be better approximated by a $K$-component GMM, suggesting a simpler latent structure.

In \cite{Chen2025MaskedAA}, the authors theoretically demonstrate that a latent space with lower GMM Loss also exhibits lower diffusion loss, and therefore propose using GMM Loss as an indicator of diffusion sample quality. In Figure~\ref{fig:gmm}, we show that GMM Loss exhibits a strong correlation with diffusion loss. However, diffusion loss does not necessarily correlate highly with sample quality when the latent spaces differ.

\subsection{The Relationship Between Metrics and gFID}
Additionally, we provide comprehensive plots illustrating the relationship between gFID and all metrics, as shown in Figures~\ref{fig:rrec} and~\ref{fig:rnrec}.
\begin{figure}[thb]
\centering
    \includegraphics[width=\linewidth]{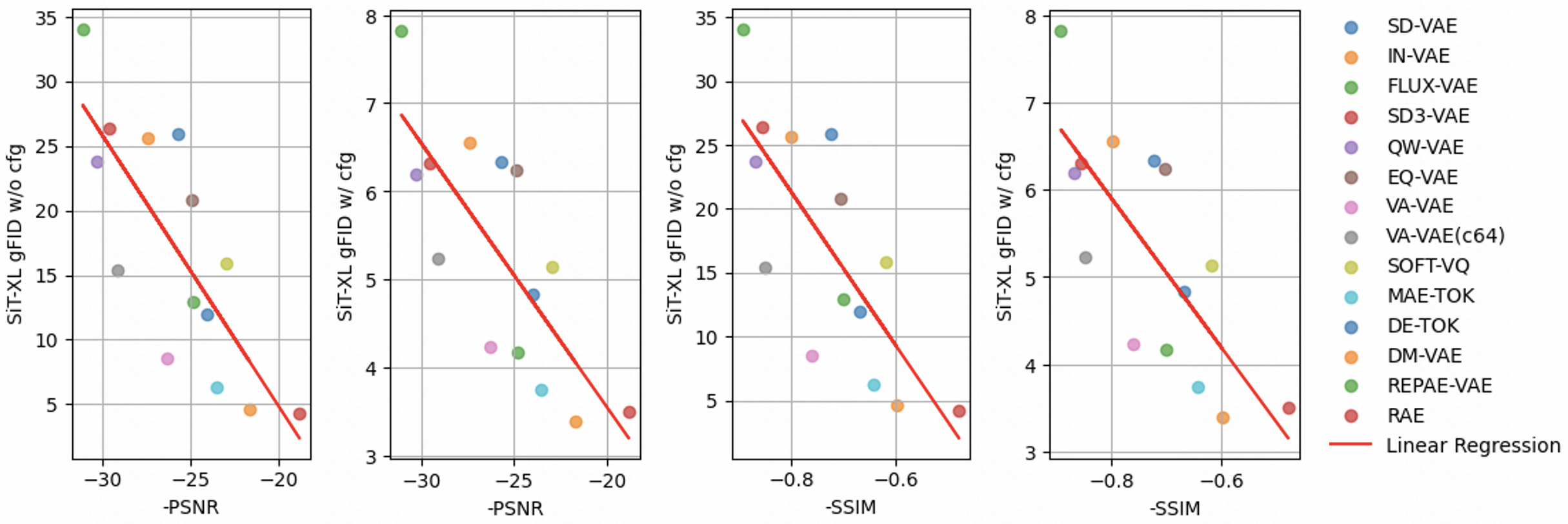}
    \includegraphics[width=\linewidth]{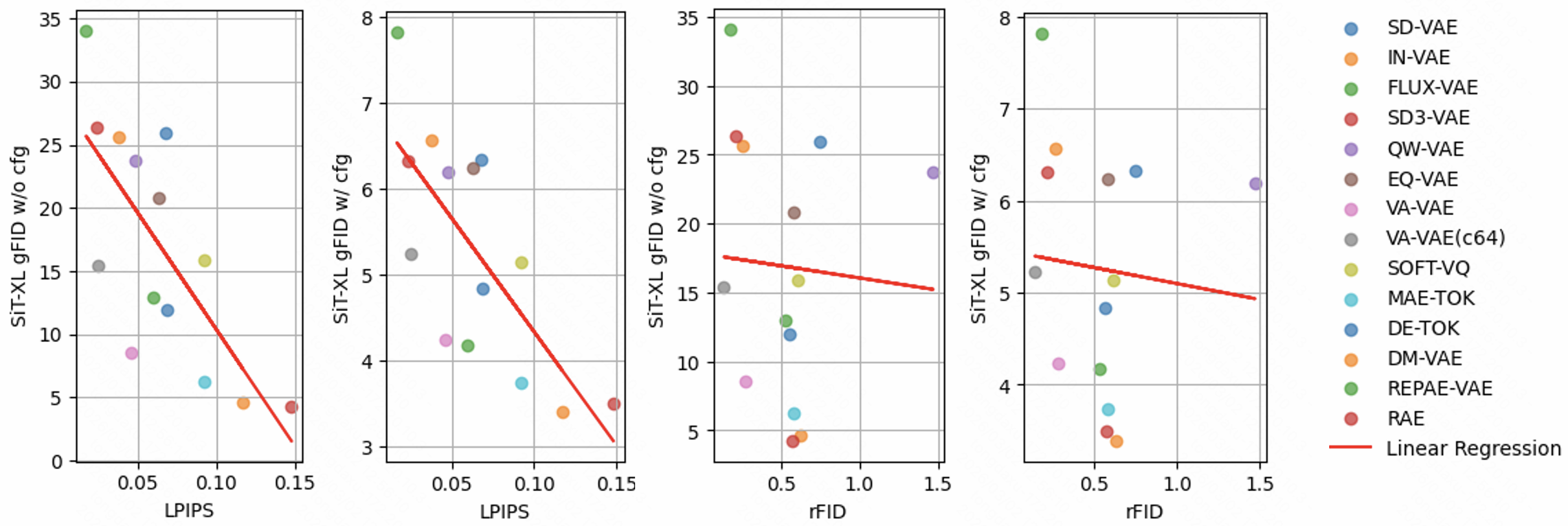}
\caption{The relationship between reconstruction metrics and gFID for SiT/XL. The results indicate that reconstruction metrics are often negatively correlated with gFID.}
\label{fig:rrec}
\end{figure}

\begin{figure}[thb]
\centering
    \includegraphics[width=\linewidth]{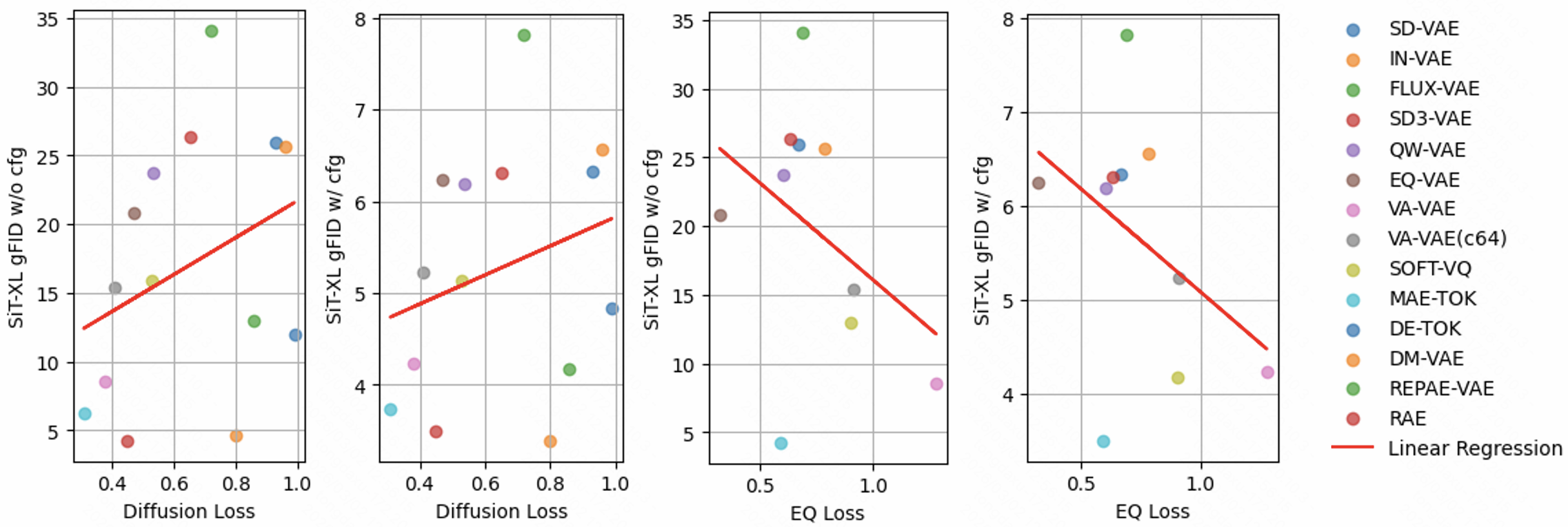}
    \includegraphics[width=\linewidth]{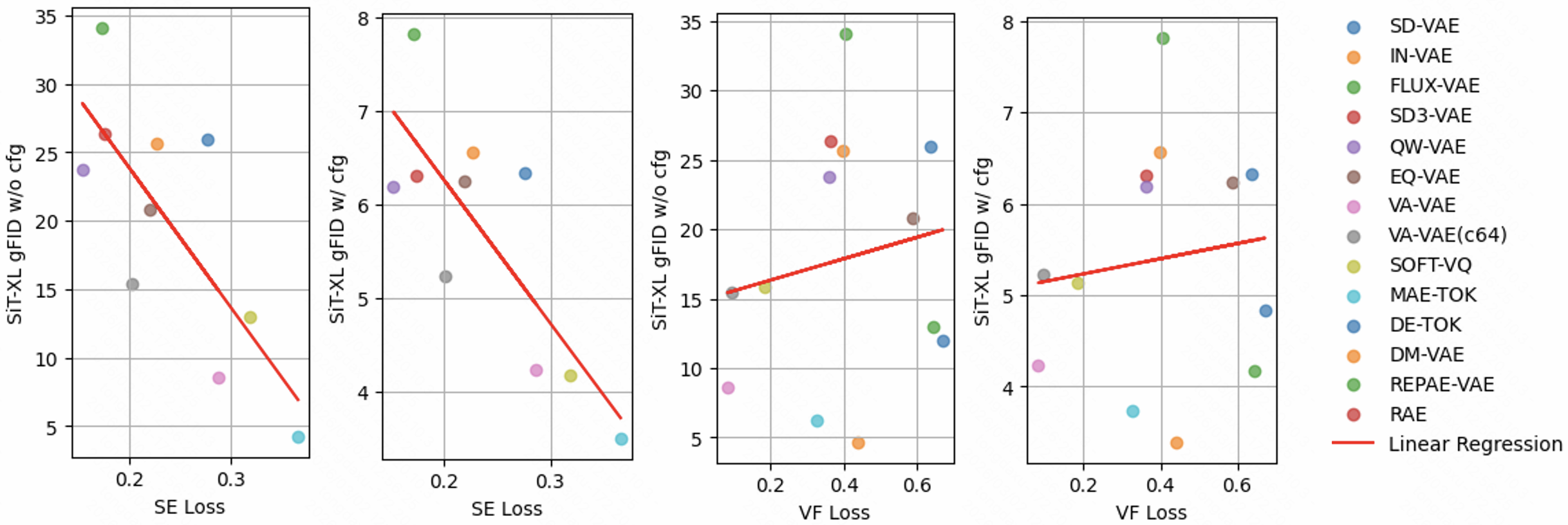}
    \includegraphics[width=\linewidth]{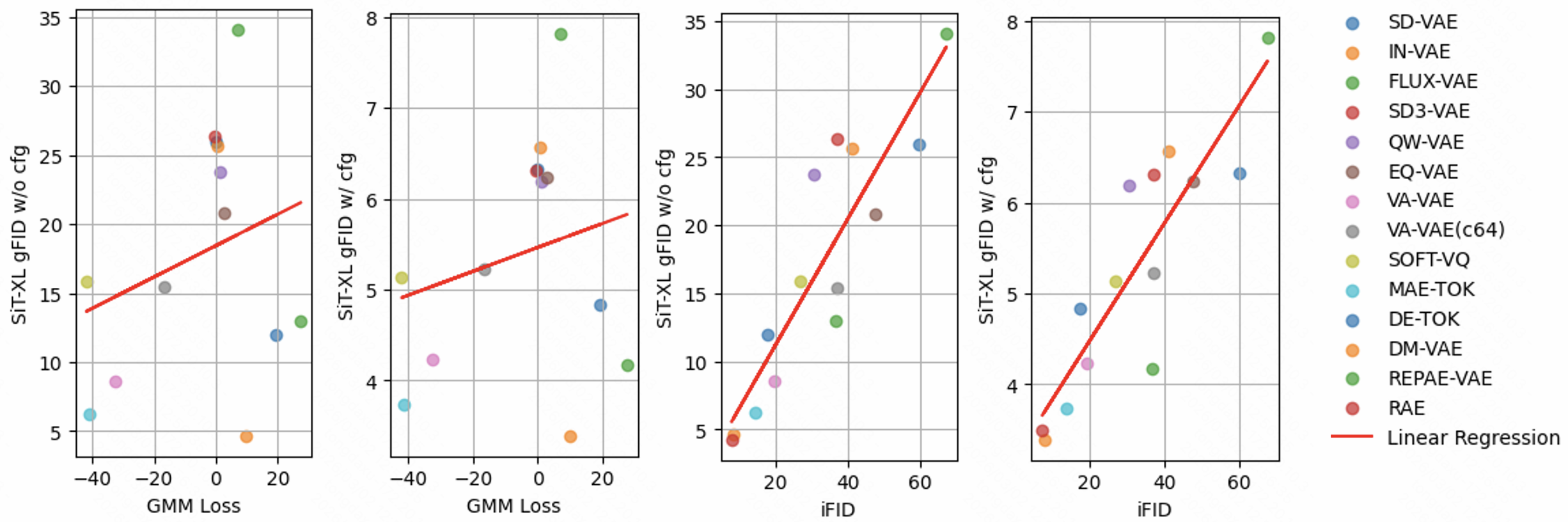}
\caption{The relationship between non-reconstruction metrics and gFID for SiT/XL. It is shown that our iFID is the only metric that is strongly correlated with gFID.}
\label{fig:rnrec}
\end{figure}

\subsection{Visualization}
In Figure~\ref{fig:viz2}, we provide additional visualizations of the source image $x$, the reconstructed image $g(z)$, the nearest neighbour image $g(\textup{NN}(z))$, and the interpolated image $g(\hat{z})$. For VAEs optimized for reconstruction, the nearest neighbours $\textup{NN}(z)$ are semantically different from $z$, and the interpolations $\hat{z}$ are not realistic. In contrast, for VAEs optimized with diffusion, the nearest neighbours $\textup{NN}(z)$ are semantically similar to $z$, and the interpolations $\hat{z}$ are more realistic.

\begin{figure}[thb]
\centering
    \includegraphics[width=0.9\linewidth]{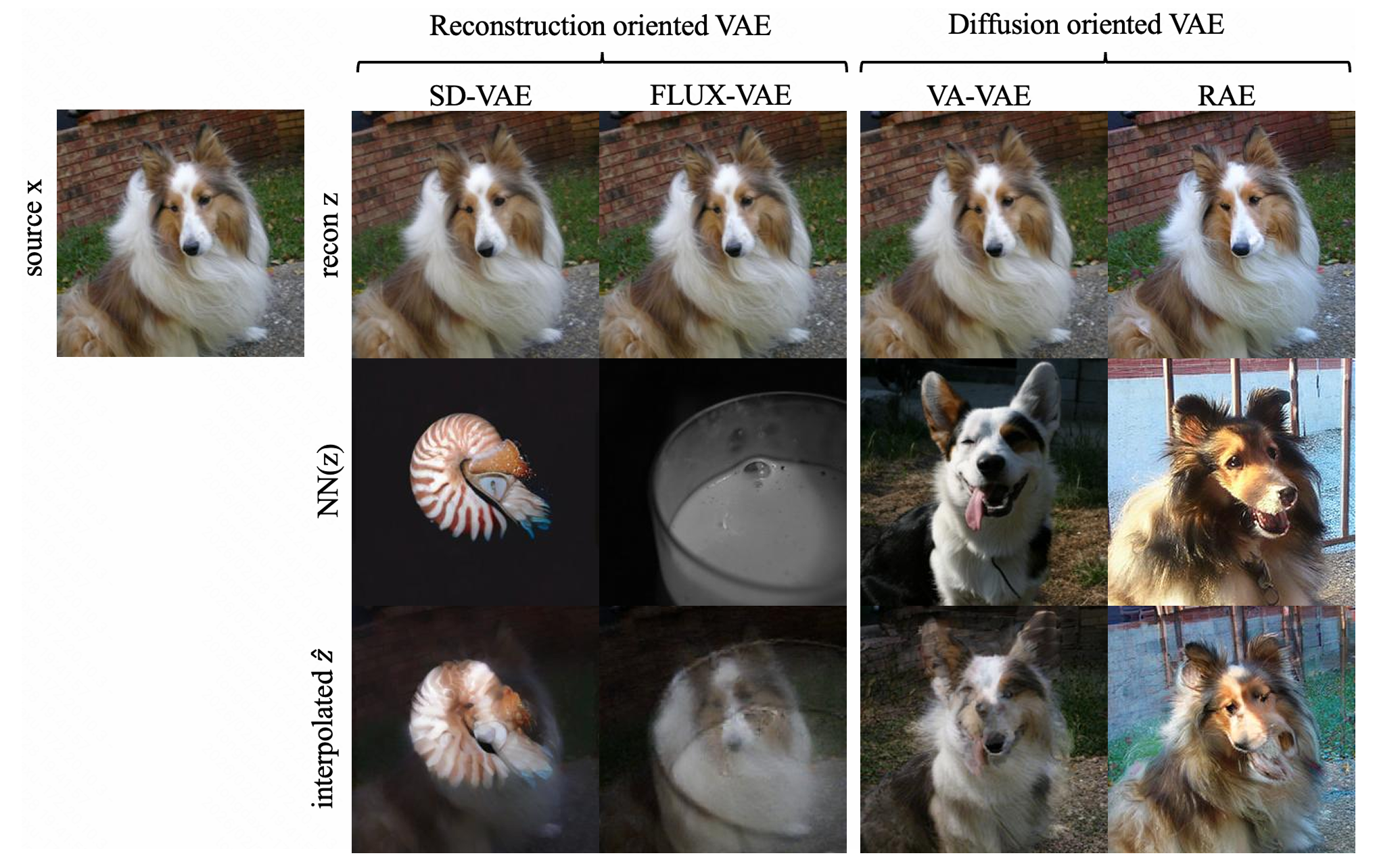}
    \includegraphics[width=0.9\linewidth]{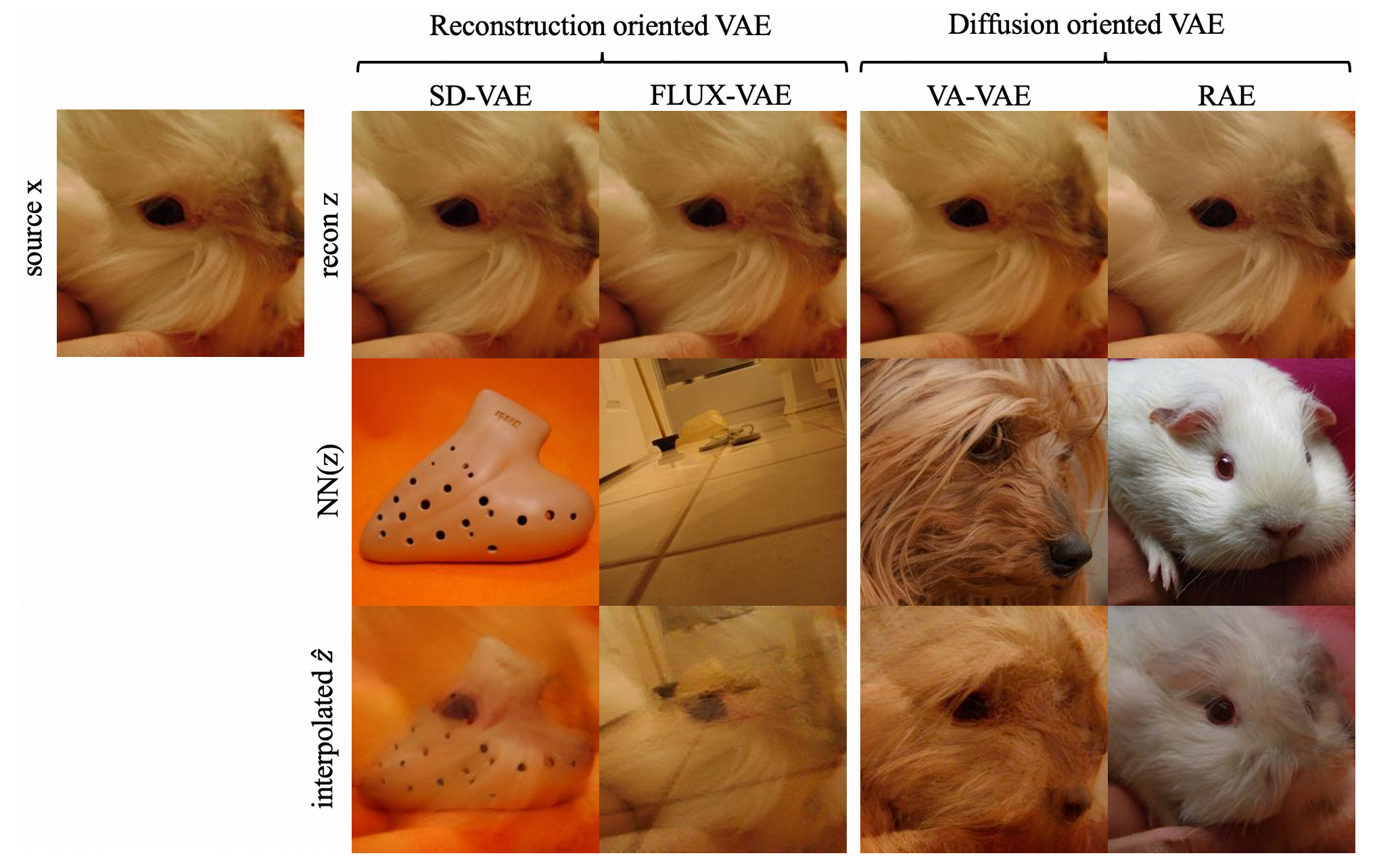}
\caption{Visualization of the decoded nearest neighbour latent NN($z$) and the interpolated latent $\hat{z}$. For reconstruction-oriented VAEs, NN($z$) is semantically different from $z$, and the interpolated $\hat{z}$ are invalid images. In contrast, for diffusion-oriented VAEs, NN($z$) is semantically similar to $z$, and the interpolated $\hat{z}$ are realistic images.}
\label{fig:viz2}
\end{figure}

\section{Additional Discussions}
\label{app:disc}
\subsection{Explanation on Diffusibility of Previous VAEs}

\textbf{Pure Reconstruction VAEs} VAEs trained solely with a reconstruction metric exhibit the worst generative performance (SD-VAE, IN-VAE, FLUX-VAE, SD3-VAE, QW-VAE) \cite{yang2025latent,flux2024,esser2024scaling,wu2025qwen,Yao2025ReconstructionVG}. According to our findings, reconstruction loss results in a more separable latent space, thereby leading to degraded generative performance.

\textbf{VAEs with Signal Processing Regularization} Several previous VAEs incorporate signal processing regularization to constrain the shape of the latent space by suppressing high-frequency information (EQ-VAE, FT-SE, SSVAE) \cite{Kouzelis2025EQVAEER,Skorokhodov2025ImprovingTD,liu2025delving}. Our findings indicate that the performance of diffusion models is primarily determined by the interpolability of the latent space. Suppressing high-frequency components in the latent space enhances latent smoothness, thereby improving interpolability.

\textbf{VAEs with Vision Foundation Models} Many previous VAEs incorporate vision foundation models as alignment mechanisms or encoders (e.g., VA-VAE, SOFT-VQ, MAE-TOK, DE-TOK, DM-VAE, REPAE-VAE, RAE, VMAE) \cite{Yao2025ReconstructionVG,chen2025softvq,Chen2025MaskedAA,ye2025distribution,Leng2025REPAEUV,Yu2024RepresentationAF,yang2025latent,lee2025latent}. According to our findings, the performance of diffusion models is primarily determined by the interpolability of the latent space. The latent space of these vision foundation models, such as DINOv2 \cite{oquab2023dinov2}, MAE \cite{he2022masked}, and Denoising Auto-encoder \cite{Vincent2011ACB}, is semantically continuous and interpolable. This explains why adopting these vision foundation models improves generation quality.

\subsection{Practical Guidance to Diffusion VAEs}

Although there is no direct way to optimize iFID, our finding tells us that reconstruction loss is harmful to generation as it encourages separable and islolated latent space. The best practical guideline we can provide is do not train the encoder with reconstruction loss. Besides, our result that RAE \cite{zheng2025diffusion} achieves best gFID w/o cfg also support this claim.

\subsection{Relationship to Discrete Tokenizers}

As early as 2022, researchers found that using vision foundation models as the encoder for vector quantized variational autoencoders (VQ-VAE) improves autoregressive image generation \cite{peng2022beit}. Subsequently, the superiority of vision foundation model encoders over reconstruction-based VQ-VAEs has been verified in numerous studies \cite{wang2024image,wu2025janus,xiong2025gigatok,geng2025x}. However, similar to VAEs in diffusion models, metrics on VQ-VAE for predicting the generation performance of autoregressive models remain under-explored. It is possible to extend iFID to VQ-VAE and evaluate its relationship to the generation performance of autoregressive models. Nevertheless, we leave this topic to future work.

\subsection{Impact Statement}
The approach proposed in this paper focus on evaluating the latent space used for diffusion model. As the proposed method is essentially not a generative model, the ethic concerns is not obvious. Nevertheless, the proposed metric might be used to improve the VAE in diffusion model, leading to more effective disinformation.

% You may include other additional sections here.
\clearpage
\newpage
\input{checklist.tex}

\end{document}

%% file: math_commands.tex
\usepackage{amsmath,amsfonts,bm}

\def\eqref#1{equation~\ref{#1}}
\def\1{\bm{1}}

\DeclareMathAlphabet{\mathsfit}{\encodingdefault}{\sfdefault}{m}{sl}
\SetMathAlphabet{\mathsfit}{bold}{\encodingdefault}{\sfdefault}{bx}{n}

%% file: checklist.tex
\section*{NeurIPS Paper Checklist}

%%% BEGIN INSTRUCTIONS %%%
%%% END INSTRUCTIONS %%%

\begin{enumerate}

\item {\bf Claims}
    \item[] Question: Do the main claims made in the abstract and introduction accurately reflect the paper's contributions and scope?
    \item[] Answer: \answerYes{} % Replace by \answerYes{}, \answerNo{}, or \answerNA{}.
    \item[] Justification: See Abstract and Introduction section.
    \item[] Guidelines:
    \begin{itemize}
        \item The answer \answerNA{} means that the abstract and introduction do not include the claims made in the paper.
        \item The abstract and/or introduction should clearly state the claims made, including the contributions made in the paper and important assumptions and limitations. A \answerNo{} or \answerNA{} answer to this question will not be perceived well by the reviewers. 
        \item The claims made should match theoretical and experimental results, and reflect how much the results can be expected to generalize to other settings. 
        \item It is fine to include aspirational goals as motivation as long as it is clear that these goals are not attained by the paper. 
    \end{itemize}

\item {\bf Limitations}
    \item[] Question: Does the paper discuss the limitations of the work performed by the authors?
    \item[] Answer: \answerYes{} % Replace by \answerYes{}, \answerNo{}, or \answerNA{}.
    \item[] Justification: See Discussion section.
    \item[] Guidelines:
    \begin{itemize}
        \item The answer \answerNA{} means that the paper has no limitation while the answer \answerNo{} means that the paper has limitations, but those are not discussed in the paper. 
        \item The authors are encouraged to create a separate ``Limitations'' section in their paper.
        \item The paper should point out any strong assumptions and how robust the results are to violations of these assumptions (e.g., independence assumptions, noiseless settings, model well-specification, asymptotic approximations only holding locally). The authors should reflect on how these assumptions might be violated in practice and what the implications would be.
        \item The authors should reflect on the scope of the claims made, e.g., if the approach was only tested on a few datasets or with a few runs. In general, empirical results often depend on implicit assumptions, which should be articulated.
        \item The authors should reflect on the factors that influence the performance of the approach. For example, a facial recognition algorithm may perform poorly when image resolution is low or images are taken in low lighting. Or a speech-to-text system might not be used reliably to provide closed captions for online lectures because it fails to handle technical jargon.
        \item The authors should discuss the computational efficiency of the proposed algorithms and how they scale with dataset size.
        \item If applicable, the authors should discuss possible limitations of their approach to address problems of privacy and fairness.
        \item While the authors might fear that complete honesty about limitations might be used by reviewers as grounds for rejection, a worse outcome might be that reviewers discover limitations that aren't acknowledged in the paper. The authors should use their best judgment and recognize that individual actions in favor of transparency play an important role in developing norms that preserve the integrity of the community. Reviewers will be specifically instructed to not penalize honesty concerning limitations.
    \end{itemize}

\item {\bf Theory assumptions and proofs}
    \item[] Question: For each theoretical result, does the paper provide the full set of assumptions and a complete (and correct) proof?
    \item[] Answer: \answerYes{} % Replace by \answerYes{}, \answerNo{}, or \answerNA{}.
    \item[] Justification: See Appendix~\ref{app:pf}.
    \item[] Guidelines:
    \begin{itemize}
        \item The answer \answerNA{} means that the paper does not include theoretical results. 
        \item All the theorems, formulas, and proofs in the paper should be numbered and cross-referenced.
        \item All assumptions should be clearly stated or referenced in the statement of any theorems.
        \item The proofs can either appear in the main paper or the supplemental material, but if they appear in the supplemental material, the authors are encouraged to provide a short proof sketch to provide intuition. 
        \item Inversely, any informal proof provided in the core of the paper should be complemented by formal proofs provided in appendix or supplemental material.
        \item Theorems and Lemmas that the proof relies upon should be properly referenced. 
    \end{itemize}

    \item {\bf Experimental result reproducibility}
    \item[] Question: Does the paper fully disclose all the information needed to reproduce the main experimental results of the paper to the extent that it affects the main claims and/or conclusions of the paper (regardless of whether the code and data are provided or not)?
    \item[] Answer: \answerYes{} % Replace by \answerYes{}, \answerNo{}, or \answerNA{}.
    \item[] Justification: See Experiment Setup section.
    \item[] Guidelines:
    \begin{itemize}
        \item The answer \answerNA{} means that the paper does not include experiments.
        \item If the paper includes experiments, a \answerNo{} answer to this question will not be perceived well by the reviewers: Making the paper reproducible is important, regardless of whether the code and data are provided or not.
        \item If the contribution is a dataset and\slash or model, the authors should describe the steps taken to make their results reproducible or verifiable. 
        \item Depending on the contribution, reproducibility can be accomplished in various ways. For example, if the contribution is a novel architecture, describing the architecture fully might suffice, or if the contribution is a specific model and empirical evaluation, it may be necessary to either make it possible for others to replicate the model with the same dataset, or provide access to the model. In general. releasing code and data is often one good way to accomplish this, but reproducibility can also be provided via detailed instructions for how to replicate the results, access to a hosted model (e.g., in the case of a large language model), releasing of a model checkpoint, or other means that are appropriate to the research performed.
        \item While NeurIPS does not require releasing code, the conference does require all submissions to provide some reasonable avenue for reproducibility, which may depend on the nature of the contribution. For example
        \begin{enumerate}
            \item If the contribution is primarily a new algorithm, the paper should make it clear how to reproduce that algorithm.
            \item If the contribution is primarily a new model architecture, the paper should describe the architecture clearly and fully.
            \item If the contribution is a new model (e.g., a large language model), then there should either be a way to access this model for reproducing the results or a way to reproduce the model (e.g., with an open-source dataset or instructions for how to construct the dataset).
            \item We recognize that reproducibility may be tricky in some cases, in which case authors are welcome to describe the particular way they provide for reproducibility. In the case of closed-source models, it may be that access to the model is limited in some way (e.g., to registered users), but it should be possible for other researchers to have some path to reproducing or verifying the results.
        \end{enumerate}
    \end{itemize}

\item {\bf Open access to data and code}
    \item[] Question: Does the paper provide open access to the data and code, with sufficient instructions to faithfully reproduce the main experimental results, as described in supplemental material?
    \item[] Answer: \answerYes{} % Replace by \answerYes{}, \answerNo{}, or \answerNA{}.
    \item[] Justification: Code is provided in supplementary material.
    \item[] Guidelines:
    \begin{itemize}
        \item The answer \answerNA{} means that paper does not include experiments requiring code.
        \item Please see the NeurIPS code and data submission guidelines (\url{https://neurips.cc/public/guides/CodeSubmissionPolicy}) for more details.
        \item While we encourage the release of code and data, we understand that this might not be possible, so \answerNo{} is an acceptable answer. Papers cannot be rejected simply for not including code, unless this is central to the contribution (e.g., for a new open-source benchmark).
        \item The instructions should contain the exact command and environment needed to run to reproduce the results. See the NeurIPS code and data submission guidelines (\url{https://neurips.cc/public/guides/CodeSubmissionPolicy}) for more details.
        \item The authors should provide instructions on data access and preparation, including how to access the raw data, preprocessed data, intermediate data, and generated data, etc.
        \item The authors should provide scripts to reproduce all experimental results for the new proposed method and baselines. If only a subset of experiments are reproducible, they should state which ones are omitted from the script and why.
        \item At submission time, to preserve anonymity, the authors should release anonymized versions (if applicable).
        \item Providing as much information as possible in supplemental material (appended to the paper) is recommended, but including URLs to data and code is permitted.
    \end{itemize}

\item {\bf Experimental setting/details}
    \item[] Question: Does the paper specify all the training and test details (e.g., data splits, hyperparameters, how they were chosen, type of optimizer) necessary to understand the results?
    \item[] Answer: \answerYes{} % Replace by \answerYes{}, \answerNo{}, or \answerNA{}.
    \item[] Justification: See Experimental Setup section.
    \item[] Guidelines:
    \begin{itemize}
        \item The answer \answerNA{} means that the paper does not include experiments.
        \item The experimental setting should be presented in the core of the paper to a level of detail that is necessary to appreciate the results and make sense of them.
        \item The full details can be provided either with the code, in appendix, or as supplemental material.
    \end{itemize}

\item {\bf Experiment statistical significance}
    \item[] Question: Does the paper report error bars suitably and correctly defined or other appropriate information about the statistical significance of the experiments?
    \item[] Answer: \answerNo{} % Replace by \answerYes{}, \answerNo{}, or \answerNA{}.
    \item[] Justification: Error bar is too expensive and is highly uncommon for our community.
    \item[] Guidelines:
    \begin{itemize}
        \item The answer \answerNA{} means that the paper does not include experiments.
        \item The authors should answer \answerYes{} if the results are accompanied by error bars, confidence intervals, or statistical significance tests, at least for the experiments that support the main claims of the paper.
        \item The factors of variability that the error bars are capturing should be clearly stated (for example, train/test split, initialization, random drawing of some parameter, or overall run with given experimental conditions).
        \item The method for calculating the error bars should be explained (closed form formula, call to a library function, bootstrap, etc.)
        \item The assumptions made should be given (e.g., Normally distributed errors).
        \item It should be clear whether the error bar is the standard deviation or the standard error of the mean.
        \item It is OK to report 1-sigma error bars, but one should state it. The authors should preferably report a 2-sigma error bar than state that they have a 96\% CI, if the hypothesis of Normality of errors is not verified.
        \item For asymmetric distributions, the authors should be careful not to show in tables or figures symmetric error bars that would yield results that are out of range (e.g., negative error rates).
        \item If error bars are reported in tables or plots, the authors should explain in the text how they were calculated and reference the corresponding figures or tables in the text.
    \end{itemize}

\item {\bf Experiments compute resources}
    \item[] Question: For each experiment, does the paper provide sufficient information on the computer resources (type of compute workers, memory, time of execution) needed to reproduce the experiments?
    \item[] Answer: \answerYes{} % Replace by \answerYes{}, \answerNo{}, or \answerNA{}.
    \item[] Justification: See Experimental Setup section.
    \item[] Guidelines:
    \begin{itemize}
        \item The answer \answerNA{} means that the paper does not include experiments.
        \item The paper should indicate the type of compute workers CPU or GPU, internal cluster, or cloud provider, including relevant memory and storage.
        \item The paper should provide the amount of compute required for each of the individual experimental runs as well as estimate the total compute. 
        \item The paper should disclose whether the full research project required more compute than the experiments reported in the paper (e.g., preliminary or failed experiments that didn't make it into the paper). 
    \end{itemize}
    
\item {\bf Code of ethics}
    \item[] Question: Does the research conducted in the paper conform, in every respect, with the NeurIPS Code of Ethics \url{https://neurips.cc/public/EthicsGuidelines}?
    \item[] Answer: \answerYes{} % Replace by \answerYes{}, \answerNo{}, or \answerNA{}.
    \item[] Justification: The research conform code of ethnics.
    \item[] Guidelines:
    \begin{itemize}
        \item The answer \answerNA{} means that the authors have not reviewed the NeurIPS Code of Ethics.
        \item If the authors answer \answerNo, they should explain the special circumstances that require a deviation from the Code of Ethics.
        \item The authors should make sure to preserve anonymity (e.g., if there is a special consideration due to laws or regulations in their jurisdiction).
    \end{itemize}

\item {\bf Broader impacts}
    \item[] Question: Does the paper discuss both potential positive societal impacts and negative societal impacts of the work performed?
    \item[] Answer: \answerYes{} % Replace by \answerYes{}, \answerNo{}, or \answerNA{}.
    \item[] Justification: See Appendix~\ref{app:disc}.
    \item[] Guidelines:
    \begin{itemize}
        \item The answer \answerNA{} means that there is no societal impact of the work performed.
        \item If the authors answer \answerNA{} or \answerNo, they should explain why their work has no societal impact or why the paper does not address societal impact.
        \item Examples of negative societal impacts include potential malicious or unintended uses (e.g., disinformation, generating fake profiles, surveillance), fairness considerations (e.g., deployment of technologies that could make decisions that unfairly impact specific groups), privacy considerations, and security considerations.
        \item The conference expects that many papers will be foundational research and not tied to particular applications, let alone deployments. However, if there is a direct path to any negative applications, the authors should point it out. For example, it is legitimate to point out that an improvement in the quality of generative models could be used to generate Deepfakes for disinformation. On the other hand, it is not needed to point out that a generic algorithm for optimizing neural networks could enable people to train models that generate Deepfakes faster.
        \item The authors should consider possible harms that could arise when the technology is being used as intended and functioning correctly, harms that could arise when the technology is being used as intended but gives incorrect results, and harms following from (intentional or unintentional) misuse of the technology.
        \item If there are negative societal impacts, the authors could also discuss possible mitigation strategies (e.g., gated release of models, providing defenses in addition to attacks, mechanisms for monitoring misuse, mechanisms to monitor how a system learns from feedback over time, improving the efficiency and accessibility of ML).
    \end{itemize}
    
\item {\bf Safeguards}
    \item[] Question: Does the paper describe safeguards that have been put in place for responsible release of data or models that have a high risk for misuse (e.g., pre-trained language models, image generators, or scraped datasets)?
    \item[] Answer: \answerNA{} % Replace by \answerYes{}, \answerNo{}, or \answerNA{}.
    \item[] Justification: No model nor data is released.
    \item[] Guidelines:
    \begin{itemize}
        \item The answer \answerNA{} means that the paper poses no such risks.
        \item Released models that have a high risk for misuse or dual-use should be released with necessary safeguards to allow for controlled use of the model, for example by requiring that users adhere to usage guidelines or restrictions to access the model or implementing safety filters. 
        \item Datasets that have been scraped from the Internet could pose safety risks. The authors should describe how they avoided releasing unsafe images.
        \item We recognize that providing effective safeguards is challenging, and many papers do not require this, but we encourage authors to take this into account and make a best faith effort.
    \end{itemize}

\item {\bf Licenses for existing assets}
    \item[] Question: Are the creators or original owners of assets (e.g., code, data, models), used in the paper, properly credited and are the license and terms of use explicitly mentioned and properly respected?
    \item[] Answer: \answerYes{} % Replace by \answerYes{}, \answerNo{}, or \answerNA{}.
    \item[] Justification: See Experimental Setup section.
    \item[] Guidelines:
    \begin{itemize}
        \item The answer \answerNA{} means that the paper does not use existing assets.
        \item The authors should cite the original paper that produced the code package or dataset.
        \item The authors should state which version of the asset is used and, if possible, include a URL.
        \item The name of the license (e.g., CC-BY 4.0) should be included for each asset.
        \item For scraped data from a particular source (e.g., website), the copyright and terms of service of that source should be provided.
        \item If assets are released, the license, copyright information, and terms of use in the package should be provided. For popular datasets, \url{paperswithcode.com/datasets} has curated licenses for some datasets. Their licensing guide can help determine the license of a dataset.
        \item For existing datasets that are re-packaged, both the original license and the license of the derived asset (if it has changed) should be provided.
        \item If this information is not available online, the authors are encouraged to reach out to the asset's creators.
    \end{itemize}

\item {\bf New assets}
    \item[] Question: Are new assets introduced in the paper well documented and is the documentation provided alongside the assets?
    \item[] Answer: \answerYes{} % Replace by \answerYes{}, \answerNo{}, or \answerNA{}.
    \item[] Justification: See supplementary material.
    \item[] Guidelines:
    \begin{itemize}
        \item The answer \answerNA{} means that the paper does not release new assets.
        \item Researchers should communicate the details of the dataset\slash code\slash model as part of their submissions via structured templates. This includes details about training, license, limitations, etc. 
        \item The paper should discuss whether and how consent was obtained from people whose asset is used.
        \item At submission time, remember to anonymize your assets (if applicable). You can either create an anonymized URL or include an anonymized zip file.
    \end{itemize}

\item {\bf Crowdsourcing and research with human subjects}
    \item[] Question: For crowdsourcing experiments and research with human subjects, does the paper include the full text of instructions given to participants and screenshots, if applicable, as well as details about compensation (if any)? 
    \item[] Answer: \answerNA{} % Replace by \answerYes{}, \answerNo{}, or \answerNA{}.
    \item[] Justification: Does not involve crowdsourcing nor human subjects.
    \item[] Guidelines:
    \begin{itemize}
        \item The answer \answerNA{} means that the paper does not involve crowdsourcing nor research with human subjects.
        \item Including this information in the supplemental material is fine, but if the main contribution of the paper involves human subjects, then as much detail as possible should be included in the main paper. 
        \item According to the NeurIPS Code of Ethics, workers involved in data collection, curation, or other labor should be paid at least the minimum wage in the country of the data collector. 
    \end{itemize}

\item {\bf Institutional review board (IRB) approvals or equivalent for research with human subjects}
    \item[] Question: Does the paper describe potential risks incurred by study participants, whether such risks were disclosed to the subjects, and whether Institutional Review Board (IRB) approvals (or an equivalent approval/review based on the requirements of your country or institution) were obtained?
    \item[] Answer: \answerNA{} % Replace by \answerYes{}, \answerNo{}, or \answerNA{}.
    \item[] Justification: Does not involve crowdsourcing nor human subjects.
    \item[] Guidelines:
    \begin{itemize}
        \item The answer \answerNA{} means that the paper does not involve crowdsourcing nor research with human subjects.
        \item Depending on the country in which research is conducted, IRB approval (or equivalent) may be required for any human subjects research. If you obtained IRB approval, you should clearly state this in the paper. 
        \item We recognize that the procedures for this may vary significantly between institutions and locations, and we expect authors to adhere to the NeurIPS Code of Ethics and the guidelines for their institution. 
        \item For initial submissions, do not include any information that would break anonymity (if applicable), such as the institution conducting the review.
    \end{itemize}

\item {\bf Declaration of LLM usage}
    \item[] Question: Does the paper describe the usage of LLMs if it is an important, original, or non-standard component of the core methods in this research? Note that if the LLM is used only for writing, editing, or formatting purposes and does \emph{not} impact the core methodology, scientific rigor, or originality of the research, declaration is not required.
    %this research? 
    \item[] Answer: \answerNA{} % Replace by \answerYes{}, \answerNo{}, or \answerNA{}.
    \item[] Justification: LLMs are used only for editing.
    \item[] Guidelines:
    \begin{itemize}
        \item The answer \answerNA{} means that the core method development in this research does not involve LLMs as any important, original, or non-standard components.
        \item Please refer to our LLM policy in the NeurIPS handbook for what should or should not be described.
    \end{itemize}

\end{enumerate}